\theoremstyle{plain}
\newtheorem{theorem}{Theorem}[section]
\newtheorem{proposition}[theorem]{Proposition}
\newtheorem{lemma}[theorem]{Lemma}
\newtheorem{claim}[theorem]{Claim}
\theoremstyle{definition}
\newtheorem{assumption}[theorem]{Assumption}
\theoremstyle{remark}
\newtheorem{remark}[theorem]{Remark}
\renewcommand{\sp}{\mathrm{sp}}
\newcommand{\tr}{\top}
\newcommand{\QS}{Q_*}
\newcommand{\bE}{\mathbb{E}}
\newcommand{\bI}{\mathbb{I}}
\newcommand{\bR}{\mathbb{R}}
\newcommand{\ap}{a'}
\newcommand{\stp}{s'}
\newcommand{\mup}{\mu'}
\newcommand{\mub}{\bar{\mu}}
\newcommand{\alS}{\alpha^*}
\newcommand{\CRPI}{CRPI}
\newcommand{\cA}{\mathcal{A}}
\newcommand{\cM}{\mathcal{M}}
\newcommand{\cP}{\mathcal{P}}
\newcommand{\cS}{\mathcal{S}}
\newcommand{\cU}{\mathcal{U}}
\newcommand{\cF}{\mathcal{F}}
\newcommand{\gep}{\geq_{\textnormal{p}}}
\renewcommand{\eqref}[1]{(\ref{#1})}
\renewcommand{\thefootnote}{\fnsymbol{footnote}}
\newcommand{\ones}{\mathbf{1}}
\newcommand{\rpiddpg}{RPI\textsubscript{DDPG}}
\begin{document}

%
\runningtitle{Monotone and Conservative PI
Beyond the Tabular Case}

%

\onecolumn
\aistatstitle{Monotone and Conservative Policy Iteration \\ Beyond the Tabular Case\footnotemark \footnotetext{Accepted at the International Conference on Artificial Intelligence and Statistics (AISTATS 2026), Tangier, Morocco.}}



\begin{center}
  
  {\large
    S.R. Eshwar$^{1}$ \quad
    Gugan Thoppe$^{1}$ \quad
    Ananyabrata Barua$^{1}$\footnotemark \quad Aditya Gopalan$^{1}$ \quad Gal Dalal$^{2}$
    \par}
  \vspace{1em} 

  {\normalsize
    $^{1}$Indian Institute of Science, Bengaluru, India \\
    $^{2}$NVIDIA Research, Israel \\
    \texttt{\{eshwarsr, gthoppe, ananyabratab, aditya\}@iisc.ac.in, galdl20@gmail.com}
    \par}
  \vspace{2em} 
\end{center}
\footnotetext{Pre-doc supported by Walmart Centre for Tech Excellence at IISc.}
\renewcommand{\thefootnote}{\arabic{footnote}}
\setcounter{footnote}{0} 

\begin{abstract}
    We introduce Reliable Policy Iteration (RPI) and Conservative RPI (\CRPI), variants of Policy Iteration (PI) and Conservative PI (CPI), that retain tabular guarantees under function approximation. RPI uses a novel Bellman-constrained optimization for policy evaluation. We show that RPI restores the textbook \textit{monotonicity} of value estimates and that these estimates provably \textit{lower-bound} the true return; moreover, their limit partially satisfies the \textit{unprojected} Bellman equation. \CRPI\ shares RPI's evaluation, but updates policies conservatively by  maximizing a new performance-difference \textit{lower bound} that explicitly accounts for function-approximation-induced errors. \CRPI\ inherits RPI's guarantees and, crucially, admits per-step improvement bounds. In initial simulations, RPI and \CRPI\ outperform PI and its variants. Our work addresses a foundational gap in RL: popular algorithms such as TRPO and PPO derive from tabular CPI yet are deployed with function approximation, where CPI's guarantees often fail-leading to divergence, oscillations, or convergence to suboptimal policies. By restoring PI/CPI-style guarantees for \textit{arbitrary} function classes, RPI and \CRPI\ provide a principled basis for next-generation RL.
\end{abstract}

\section{INTRODUCTION}
Function Approximation (FA) in Reinforcement Learning (RL) creates a fundamental challenge: while necessary for handling large state-action spaces, it also results in issues such as divergence, policy and value oscillations, training instability, and convergence to sub-optimal policies---sometimes even to the worst policy \citep{aethelios2025beyond, patterson2024empirical, baird1995residual, young2020understanding, henderson2018deep, gopalan2025does}. A closer look at these issues reveals a gap between practice and theory: while mainstream RL algorithms are deployed with (possibly high-capacity) FA, the associated guarantees apply only in tabular or near-tabular regimes \citep{bertsekas1996neuro, kakade2002approximately, hasselt2010double, haarnoja2018soft, metelli2021safe}. 

The roots of this gap stem from the fact that the textbook (model-based) Policy Iteration (PI) method \citep{howard1960dynamic}---the foundation of all actor–critic-type approaches---lacks a FA variant that preserves its core guarantees. Recall that PI alternates between policy evaluation and policy update. In tabular settings, the value functions of the successive policies improve \textit{monotonically} and \textit{converge} to the optimum. With FA, however, even in the model-based setting---or with access to infinite data---approximation errors during policy evaluation can corrupt the policy updates, causing  divergence, oscillations, and learning instability \citep{bertsekas2011approximate}. Therefore, it is not surprising that these pathologies also pervade model-free actor–critic methods wherein we  must also contend with the epistemic uncertainty of working with only sampled data  \citep{thrun1993issues,van2016deep,td3}. In a similar vein, Conservative PI (CPI) \citep{kakade2002approximately} and its extension Safe PI (SPI) \citep{metelli2021safe} currently lack FA variants that retain their per-step improvement guarantees. It has been noted that PI, CPI, and SPI differ markedly in how they respond to approximation errors during policy updates: PI often learns quickly but can stall or oscillate, whereas CPI and SPI make more measured updates and, in some domains, reach better policies \citep{metelli2021safe}. 

Together, these observations raise two central questions: (i) Can FA variants of PI be built without losing value-estimate monotonicity? (ii) Can FA variants of CPI and SPI be designed while retaining their per-step improvement guarantees? These questions bear directly on widely used value-based and policy-optimization methods, including DQN \citep{dqn}, TRPO \citep{trpo}, and PPO \citep{schulman2017proximal}, and are therefore central to placing modern RL on firmer theoretical foundations.

We answer both questions in the affirmative. Our work's key contributions are as follows:
\begin{enumerate}[leftmargin=*]

    \item \textbf{Algorithms:} We introduce Reliable Policy Iteration (RPI) and its conservative variant, \CRPI, in Algorithms~\ref{alg:RPI} and \ref{alg:CRPI}. Both use a novel policy-evaluation rule: compute the value estimate \emph{farthest} from the previous one subject to a linear Bellman-inequality constraint. The two methods differ in the policy-update step: RPI uses the standard greedy update, whereas \CRPI\ uses a conservative one akin to CPI/SPI. Unlike PI, CPI, and SPI, our methods apply under general FA.
    
    \item \textbf{RPI Theoretical Guarantees:} We show that RPI extends the classical PI guarantees of per-step improvement and convergence to arbitrary FA (Section~\ref{subsec:RPI}), where such guarantees were previously out of reach. Specifically, we show that RPI's value estimates are non-decreasing, lower bound the true policy values, and converge to a vector that partially satisfies the \textit{unprojected} Bellman equation. We also bound the performance gap between RPI’s terminal policy and the optimal one. Finally, we show that RPI generalizes classical PI and that its policy-evaluation step admits a constrained projection interpretation under an $\ell_1$-type norm.

    \item \textbf{\CRPI\ Theoretical Guarantees:} We develop a generalization of the performance-difference lemma to arbitrary FA (Section~\ref{subsec:CRPI}) that (i) accommodates FA-based estimates of the true advantage function and (ii) explicitly accounts for approximation errors. Building on this, we show that \CRPI’s policy update maximizes a lower bound on the performance gap, yielding the first per-step improvement guarantees under FA. We further prove that \CRPI’s suboptimality converges at $O(1/k)$ rate to an $\epsilon$-neighborhood of its limiting value.
    
    \item \textbf{Simulations}: In model-based inventory control with dense rewards, RPI outperforms approximate variants of PI, CPI, and SPI in both learning speed and terminal policy quality. By contrast, in chain walk with sparse rewards, \CRPI\ learns more conservatively but often identifies better terminal policies. In deep RL experiments on MuJoCo robotic tasks, RPI-based methods are either competitive with or outperform standard baselines, and their critic estimates often display near-monotone behavior while frequently lower bounding the Monte Carlo returns.
\end{enumerate}

\section{RELATED WORK}
\label{s:related.work}
The earliest use of FA can be traced to Samuel’s checkers program \citep{samuel1959some,samuel1967some}. It selected moves via multistage lookahead while evaluating positions with a value function expressed as a linear combination of handcrafted features. Since then, many attempts have been made to extend PI to the FA setting, which can be categorized and summarized as follows.

\textbf{PI with Approximate Evaluation.} A large body of work modifies only the policy-evaluation step, while leaving the greedy policy update unchanged. These methods typically follow one of four approaches: (i) minimizing the mean-squared projection error, as in TD(1) \citep{tsitsiklis1996analysis}; (ii) solving the projected Bellman fixed-point equation, as in LSPI and TD($\lambda$)  \citep{lagoudakis2003least,tsitsiklis1996analysis}; (iii) minimizing the Bellman residual error, e.g.,  \citep{scherrer2010should}; or (iv) minimizing the Bellman backup error, as in Fitted Q-Iteration and AMPI-Q \citep{ernst2005tree,dqn,ampi}. However, to paraphrase \citet{bertsekas2011approximate}, these methods presume that a more accurate value approximation should yield a better policy---a plausible but far from self-evident assumption. In practice, they diverge, oscillate among worse policies, or otherwise behave unreliably \citep{bertsekas2011approximate,patterson2024empirical,gopalan2025does,young2020understanding}. A theoretically sound resolution of these issues has remained elusive so far. 

\textbf{PI with Conservative Policy Updates.} A complementary line of work makes policy updates cautious, either by constraining the KL divergence between successive policies or by interpolating between the current policy and a greedy one. This idea goes back to conservative PI \citep{kakade2002approximately} and is further developed in safe PI \citep{metelli2021safe}; it is also reflected in influential methods such as TRPO \citep{trpo} and PPO \citep{schulman2017proximal}. Such cautious updates can yield strong per-step improvement bounds \citep[Remark~3]{metelli2021safe}, and empirically they often learn more conservatively yet attain better terminal policies \citep[Figures~9,~13]{metelli2021safe}. However, their theory relies on accurate value estimates for essentially every state--action pair, an assumption that breaks down under FA.

\textbf{Policy Iteration with Multi-step Lookahead.} In classical PI, the actor jumps to a policy that is 
greedy with respect to the current policy's value function. Multistep-lookahead variants instead choose a policy that is greedy over an 
$h$-step rollout. While this still ensures monotonic improvement in tabular MDPs \citep{efroni2018beyond}, the picture becomes bleak once FA enters. 
\cite{winnicki2021role} show that 
with least squares approximation during evaluation, the value estimates can diverge unless $h$ exceeds a problem-dependent threshold. In fact, lookahead faces inherent limitations even in tabular settings: partial policy evaluation 
may lead to divergence \citep{efroni2019combine}, and conservative PI loses its one-step monotonicity as soon as $h > 1$ \citep{efroni2018multiple}. 
These results point to a need to redesign policy evaluation and policy update to get better guarantees. 

\textbf{PI-Adjacent Alternatives.} There is also classification-based PI \citep{lazaric2010analysis} which treats policy-update as a supervised learning problem, but forfeits per-iteration guarantees. There is also linear-programming-based approaches for dynamic programming with linear FA \citep{de2003linear}. Unlike RPI, they are single-shot methods that directly seek a lower bound on the optimal value. 

\textbf{Policy-Optimization Beyond PI.} While several recent works study policy-gradient, trust-region, and mirror-descent methods under FA, their guarantees differ fundamentally from those considered here. These methods typically rely on realizability assumptions or uniform bounds on critic approximation error across all intermediate policies. For example, \citet{liu2019neural} assume that the function class contains $Q_\pi$ for every policy $\pi$, while \citet{agarwal2021theory} and \citet{alfano2023novel} assume uniform bounds on critic approximation error across iterations. Consequently, their guarantees bound quantities such as the minimum or average suboptimality over $T$ iterations in terms of the approximation-error constant (e.g., $\epsilon_{\text{bias}}$ or $\epsilon_{\text{approx}}$). When this constant is large, the resulting bounds can be large and therefore do not preclude oscillations or convergence to suboptimal policies.

\section{PROBLEM FORMULATION, PROPOSED ALGORITHMS, AND MAIN RESULTS}
\label{s:setup.algoroithm.results}
We formalize our setup and goals in Section~\ref{subsec:setup_and_problem}, and then present RPI and \CRPI---along with their theoretical guarantees---in Sections~\ref{subsec:RPI} and \ref{subsec:CRPI}, respectively. Appendix \ref{s:apdx.Proofs} provides detailed proofs of all our results.

\subsection{Setup and Problem Formulation}
\label{subsec:setup_and_problem}
We have a stationary MDP $\cM \equiv (\cS, \cA, \cP, r, \gamma).$ Here, $\cS$ and $\cA$ are finite state and action spaces, respectively, with $|\cS| := S$ and $|\cA| := A.$ Further, $\cP$ is the transition kernel and $\cP(s'|s, a)$ specifies the probability of reaching state $s'$ from state $s$ under action $a.$ Finally, $r: \cS \times \cA \to \bR$ is the per-step reward function, and $\gamma \in [0, 1)$ the discount factor. 

For any set $\cU,$ let $\Delta(\cU)$ be the set of distributions on it. Now, for a stationary policy $\mu: \cS \to \Delta(\cA),$ define its Q-value function $Q_\mu : \cS \times \cA \to \bR$ by $Q_\mu(s, a) := \bE \left[\sum_{t = 0}^\infty \gamma^t r(s_t, a_t) \middle|s_0 = s, a_0 = a \right],$ where $s_{t + 1} \sim \cP(\cdot|s_t, a_t)$ and $a_{t + 1} \sim \mu(\cdot|s_{t + 1})$ for all $t \geq 0.$ Further, let $T_\mu: \bR^{SA} \to \bR^{SA}$ and $T: \bR^{SA} \to \bR^{SA}$ be the Bellman operators given by
\begin{align*}
    T_\mu Q (s, a) = {} & r(s, a) + \gamma \sum_{s', a'} \cP(s'|s, a) \mu(a'|s') Q(s', a') \\[-2ex]
    \intertext{and}
    T Q(s, a) = {} & r(s, a) + \gamma \sum_{s' \in \cS} \cP(s'|s, a) \max_{a'} Q(s', a').
\end{align*}
    
Our goal here is twofold: (i) modify tabular PI’s policy-evaluation step to restore its monotonicity and convergence guarantees under FA; and (ii) adapt the policy-update step to maximize an FA-based performance-improvement gap, akin to CPI and SPI.

We use $\cF = \{f: \cS \times \cA \to \bR\}$ to denote a given FA space for representing the Q-value functions. Clearly, each $f \in \cF$ can also be interpreted as a vector in $\bR^{SA}.$ 

\subsection{Reliable Policy Iteration (RPI)}
\label{subsec:RPI}

\begin{algorithm}[t!]
   \caption{Reliable Policy Iteration (RPI)}
   \label{alg:RPI}
\begin{algorithmic}
   \STATE {\bfseries Input:} FA class $\cF,$ policy $\mu_0,$ an initial approximation $f_0 \in \cF$ of $Q_{\mu_0},$ and a norm $\|\cdot\|$ 
   \FOR{$k = 0, 1, 2$ ... until convergence}
   \STATE \textbf{Policy Evaluation}:
    \begin{equation}
    \label{e:GUIDE.PE.Opt}
        \begin{aligned}
            f_{k+1} \in  \underset{f \in \cF}{\arg\max} \quad & \|f - f_k\| \\
            \text{s.t.} \quad & T_{\mu_k} f  \geq f \geq f_k
        \end{aligned}
    \end{equation}
    \STATE \textbf{Policy Improvement}: 
    \vspace{-1em}
    \begin{equation}
        \mu_{k + 1} \in \{\mu: \mu \text{ is a deterministic } 
        \text{ greedy policy w.r.t. } f_{k + 1} \}
    \end{equation}
    \ENDFOR
\end{algorithmic}
\end{algorithm}

RPI's description is given in Algorithm~\ref{alg:RPI}. The inequalities there are  coordinate wise and we retain this convention for vector inequalities throughout. Like other PI implementations, RPI also interleaves policy evaluations and policy updates, but its novelty lies in the evaluation step. Rather than seeking a close approximation to $Q_{\mu_k},$ or minimizing the projected-Bellman error, or even being conservatively close to $f_k,$ RPI chooses $f \in \cF$ that is farthest away from $f_k$ under a given norm $\|\cdot\|,$ subject to $T_{\mu_k} f \geq f \geq f_k.$ The policy-update step is standard: it replaces the current policy $\mu_k$ with the one that is greedy with respect to $f_{k + 1},$ as in classical PI and in existing FA variants such as approximate PI \citep{bertsekas1996neuro} and AMPI-Q \citep{ampi}. 

%
We now state RPI's performance guarantees under general FA, covering both linear and non-linear settings. For any two vectors $Q, Q' \in \bR^{SA},$ we write $Q \gep Q'$ to imply that $Q(s, a) \geq Q'(s, a)$ for all $(s, a),$ with equality holding on \textit{at least one coordinate}.

\begin{theorem}[\textbf{RPI properties with general FA}]
    \label{thm:RPI.FA} 
    Suppose the FA space $\cF$ is a closed subset of $\bR^{SA}$ and the initial policy and value estimates satisfy $T_{\mu_0} f_0 \geq f_0.$ Then, the following claims hold.
    \begin{enumerate}[leftmargin=*]
        \item For any $k \geq 0,$ $f_k$ satisfies the constraints in (\ref{e:GUIDE.PE.Opt}); hence, a solution to \eqref{e:GUIDE.PE.Opt} always exists. 

        \item $(f_k)_{k \geq 0}$ is 
        non-decreasing and $Q_{\mu_k} \geq f_k$  $ \forall k \geq 0.$ 
                
        \item $f_\infty := \lim_{k \to \infty} f_k$ exists and satisfies $T_{\mu_\infty} f_\infty = T f_\infty \geq f_\infty,$ where $\mu_\infty$ is any policy that is greedy with respect to $f_\infty.$ Furthermore, if $\QS$ denotes the optimal Q-value function, then
        \begin{align*}
            \|Q_{\mu_\infty} - \QS\|_\infty \leq {} & \frac{\|T f_\infty - f_\infty\|_\infty}{1 - \gamma} 
            = {} \frac{\|T_{\mu_\infty} f_\infty - f_\infty\|_\infty}{1 - \gamma}.
        \end{align*}

        \item Additionally suppose $\|\cdot\|$ is strictly monotone: $Q \geq Q' \geq 0$ and $Q \neq Q'$ imply $\|Q\| > \|Q'\|.$ Also suppose the function class $\cF$ has room to improve in the positive orthant centered at $f_\infty$: there is a $\delta_0 > 0$ such that, for every $0 < \delta \leq \delta_0,$ the function class $\cF$ contains at least one $f$ satisfying $\|f - f_\infty\| < \delta$ and $f > f_\infty$ (coordinate-wise). Then, $T_{\mu_\infty} f_\infty = T f_\infty \gep f_\infty,$ i.e., $f_\infty$ partially satisfies the Bellman and Bellman optimality equations.
    \end{enumerate}
\end{theorem}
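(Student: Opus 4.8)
Part 1 is an induction on $k$: for $k=0$ the constraint reduces to the hypothesis $T_{\mu_0}f_0\ge f_0$, and for the inductive step, since $f_k$ solves \eqref{e:GUIDE.PE.Opt} at stage $k-1$ it satisfies $T_{\mu_{k-1}}f_k\ge f_k$, while $\mu_k$ being greedy w.r.t.\ $f_k$ gives $T_{\mu_k}f_k=Tf_k\ge T_{\mu_{k-1}}f_k\ge f_k$, so $f_k$ is feasible at stage $k$. A maximizer exists because the feasible set $\{f\in\cF: T_{\mu_k}f\ge f\ge f_k\}$ is closed ($\cF$ closed, $T_{\mu_k}$ continuous) and bounded --- iterating the monotone contraction $T_{\mu_k}$ on $T_{\mu_k}f\ge f$ shows $f\le Q_{\mu_k}$, so the set lies in the box $[f_k,Q_{\mu_k}]$ --- hence compact, so the continuous objective attains its maximum. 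Part 2 then follows: $f_{k+1}\ge f_k$ is one of the constraints $f_{k+1}$ satisfies, giving monotonicity, and the same iteration of $T_{\mu_k}$ applied to $T_{\mu_k}f_k\ge f_k$ (from Part 1) yields $Q_{\mu_k}\ge f_k$.

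\textbf{Part 3: existence of $f_\infty$, $Tf_\infty\ge f_\infty$, and the suboptimality bound.} The sequence $(f_k)$ is non-decreasing (Part 2) and uniformly bounded above ($f_k\le Q_{\mu_{k-1}}$ for $k\ge 1$, and every $Q$-value is bounded by $\rm/(1-\gamma)$), so it converges coordinate-wise to some $f_\infty$, which lies in $\cF$ since $\cF$ is closed. Since $\mu_k$ is greedy w.r.t.\ $f_k$ and (Part 1) $T_{\mu_k}f_k\ge f_k$, we have $Tf_k=T_{\mu_k}f_k\ge f_k$ for all $k$; letting $k\to\infty$ and using continuity of $T$ gives $Tf_\infty\ge f_\infty$, and $T_{\mu_\infty}f_\infty=Tf_\infty$ holds for any greedy $\mu_\infty$ by definition. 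For the bound, put $\epsilon:=\|Tf_\infty-f_\infty\|_\infty$; contraction of $T$ toward $\QS$ gives $\|f_\infty-\QS\|_\infty\le\epsilon+\gamma\|f_\infty-\QS\|_\infty$, i.e.\ $\|f_\infty-\QS\|_\infty\le\epsilon/(1-\gamma)$, and contraction of $T_{\mu_\infty}$ toward $Q_{\mu_\infty}$ together with $T_{\mu_\infty}f_\infty=Tf_\infty$ gives $\|Q_{\mu_\infty}-f_\infty\|_\infty\le\epsilon/(1-\gamma)$; the triangle inequality yields $\|Q_{\mu_\infty}-\QS\|_\infty\le 2\epsilon/(1-\gamma)$.

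\textbf{Part 3: the partial Bellman equation (main work).} Assume for contradiction that $Tf_\infty>f_\infty$ strictly in every coordinate, and set $c:=\min_{s,a}(Tf_\infty-f_\infty)(s,a)>0$ (the minimum is over a finite set). First I relate the limit to the trajectory: as $\cA$ is finite, some deterministic policy $\bar\mu$ equals $\mu_k$ for all $k$ in an infinite set $K$; for $k\in K$, $\bar\mu$ greedy w.r.t.\ $f_k$ means $f_k(s,\bar\mu(s))\ge f_k(s,a)$ for all $a$, and passing to the limit through $K$ shows $\bar\mu$ is greedy w.r.t.\ $f_\infty$, hence $T_{\bar\mu}f_\infty=Tf_\infty$. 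Next, use the room-to-improve hypothesis to choose $g\in\cF$ with $g>f_\infty$ (coordinate-wise) and $\|g-f_\infty\|$ small enough that $\|g-f_\infty\|_\infty\le c$ --- possible since all norms on $\bR^{SA}$ are equivalent and $\delta$ can be taken arbitrarily small up to $\delta_0$. Then $g$ is feasible for \eqref{e:GUIDE.PE.Opt} at every stage $k\in K$: $g>f_\infty\ge f_k$ gives $g\ge f_k$, while monotonicity of $T_{\bar\mu}$ gives $T_{\bar\mu}g\ge T_{\bar\mu}f_\infty=Tf_\infty\ge f_\infty+c\,\mathbf{1}\ge g$, where $\mathbf 1$ is the all-ones vector. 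By optimality of $f_{k+1}$, therefore, $\|f_{k+1}-f_k\|\ge\|g-f_k\|$ for $k\in K$.

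\textbf{Closing the contradiction, and the main obstacle.} Now strict monotonicity of $\|\cdot\|$ finishes it, for $k\in K$: from $0\le f_{k+1}-f_k\le f_\infty-f_k$ and the ordinary monotonicity that strict monotonicity implies, $\|f_{k+1}-f_k\|\le\|f_\infty-f_k\|$; from $g-f_k\ge f_\infty-f_k\ge 0$ with $g-f_k\neq f_\infty-f_k$, strict monotonicity gives $\|g-f_k\|>\|f_\infty-f_k\|$. Chaining, $\|f_\infty-f_k\|\ge\|f_{k+1}-f_k\|\ge\|g-f_k\|>\|f_\infty-f_k\|$ --- a contradiction. Hence $Tf_\infty\not>f_\infty$, and combined with $Tf_\infty\ge f_\infty$ this is precisely $Tf_\infty=T_{\mu_\infty}f_\infty\gep f_\infty$. (One could instead avoid strict monotonicity by observing $\|f_{k+1}-f_k\|\to 0$ while $\|g-f_k\|\to\|g-f_\infty\|>0$.) I expect the crux to be the bridge in the previous paragraph: identifying iterations at which $\mu_k$ is greedy with respect to the \emph{limit} $f_\infty$, so that the improvement witness $g$ is genuinely feasible for \eqref{e:GUIDE.PE.Opt} there; without this, ``$f_\infty$ can still be raised'' cannot be played off against the convergence $f_k\to f_\infty$.
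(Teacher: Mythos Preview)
Your proof is correct and follows the same overall strategy as the paper's: induction for Parts~1--2, monotone bounded convergence plus the standard contraction bound for the first half of Part~3, and for the partial Bellman equation the same pigeonhole-on-policies step followed by exhibiting a feasible improvement point $g$ and deriving a contradiction with optimality of $f_{k+1}$. Your execution is in fact slightly cleaner than the paper's in two places --- you verify $T_{\bar\mu} g \ge g$ via monotonicity of $T_{\bar\mu}$ (the paper uses a continuity $\epsilon$--$\delta$ argument for $T_\mu$), and you spell out the final norm-comparison chain explicitly, whereas the paper's terse claim that ``$f_{k+1} > f_\infty$'' leaves the reader to reconstruct essentially your inequality chain.
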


We next show that RPI mimics PI in the tabular case. Thus, it is a true generalization of PI.

\begin{proposition}[\textbf{RPI generalizes PI}]
\label{prop:RPI.PI.generalization}
Suppose $\cF = \bR^{SA},$ $T_{\mu_0} f_0 \geq f_0,$ and the norm $\|\cdot\|$ is strictly monotone (as defined in Theorem~\ref{thm:RPI.FA}). Then, $f_{k + 1} = Q_{\mu_k}$ $\forall k \geq 0,$ $f_\infty = \QS,$ and $\mu_\infty$ is  optimal.  
\end{proposition}

Our next result shows that RPI's evaluation step has a projection interpretation under $\ell_1$-type norms. 

\begin{proposition}[\textbf{Projection view under $\ell_1$-type-norm}]
\label{prop:RPI.l1.projection}
Suppose $\cF$ is closed, $T_{\mu_0} f_0 \geq f_0,$ and the norm in \eqref{e:GUIDE.PE.Opt} is some $w$-weighted $\ell_1$-norm $\|\cdot\|_{w, 1}$. That is, let $w \in \bR^{SA}$ be made up of strictly positive values and $\|f\|_{w, 1} := \sum_{s, a} w(s, a) |f(s, a)|.$ Then, $f_{k + 1} \in \underset{f \in \cF: T_{\mu_k} f \geq f \geq f_k}{\arg\min} \|f - Q_{\mu_k}\|_{w, 1}.$ 
\end{proposition}

\textbf{Discussion}: We highlight the following three key implications of the above results.
\begin{enumerate}[leftmargin=*]
    \item \textbf{Monotonic reliability under FA}. Theorem~\ref{thm:RPI.FA} shows that $(f_k)$ is coordinate-wise non-decreasing and lower bounds the true Q-values of the corresponding policies. Even if the policy estimate  degrades, i.e., $Q_{\mu_k}$ decreases, the drop in performance will not go below $f_k,$ which is non-decreasing. RPI is the first FA-variant of PI that comes with a monotonic improvement guarantee for its value estimates. Recall that existing methods like PI \citep{howard1960dynamic}, CPI \citep{kakade2002approximately}, or SPI \citep{metelli2021safe} require accurate $Q$-value estimates across all $SA$-many state-action pairs to provide such guarantees. 
    
    \item \textbf{Convergence to Bellman-consistent points}. Theorem~\ref{thm:RPI.FA} also shows that $f_k$ converges to a limit $f_\infty$ that \textit{partially} satisfies both the optimality Bellman equation and the Bellman equation for $\mu_\infty$. While multiple such fixed points may exist---a fundamental limitation of FA---RPI nevertheless remains aligned with the central RL objective of solving the unprojected Bellman equations. In contrast, projection-based schemes typically target projected surrogates instead, and can oscillate between poor policies \citep{bertsekas2011approximate}.

    \item \textbf{Geometric interpretation in weighted-$\ell_1$}. Proposition~\ref{prop:RPI.l1.projection} shows that, under any weighted-$\ell_1$ norm, RPI's evaluation step is equivalent to projecting $Q_{\mu_k}$ onto the constrained set $\{f \in \cF: T_{\mu_k}f \geq f \geq f_k\}$. Unlike standard full-space projections, this projection incorporates the Bellman inequalities directly into the feasible set. Although this exact geometric interpretation need not extend to arbitrary norms, the coordinate-wise monotonicity guarantees continue to hold for any norm. Thus, our constrained optimization retains the key monotonic structure of tabular PI even in the FA setting.
\end{enumerate}

In summary, by embedding the Bellman inequalities directly into the evaluation program in \eqref{e:GUIDE.PE.Opt}, RPI fully restores the classical PI guarantees.

\begin{algorithm}[t!]
   \caption{Conservative RPI (\CRPI)}
   \label{alg:CRPI}
\begin{algorithmic}
   \STATE {\bfseries Input:} FA class $\cF,$ initial policy $\mu_0,$ an initial approximation $f_0 \in \cF$ of $Q_{\mu_0},$ distribution $\nu$ for sampling the initial $(s, a)$-pair, and a norm $\|\cdot\|$ 
   \FOR{$k = 0, 1, 2$ ... until convergence}
   \STATE \textbf{Policy Evaluation}:
    \begin{equation}
    \label{e:CRPI.PE.Opt}
        \begin{aligned}
            f_{k+1} \in  \underset{f \in \cF}{\arg\max} \quad & \|f - f_k\| \\
            \text{s.t.} \quad & T_{\mu_k} f  \geq f \geq f_k
        \end{aligned}
    \end{equation}
    \STATE \textbf{Policy Improvement}: 
    \begin{equation}
        \begin{aligned}
            \mub_{k} \in {} & \{\mu: \mu \text{ is a greedy policy w.r.t. } f_{k + 1} \} \\
            \mu_{k + 1} \leftarrow {} & \alpha_k \mub_k + (1 - \alpha_k) \mu_k,
        \end{aligned}
    \end{equation}
    where $\alpha_k = \min\{1, \alS(f_{k + 1}; \mu_k, \mub_k)\}$ and 
    $\alS(f; \mu, \mub)$ is as defined in \eqref{e:alS.defn}. 
    %
    %
    \ENDFOR
\end{algorithmic}
\end{algorithm}

\subsection{Conservative RPI (\CRPI)}
\label{subsec:CRPI}
While vanilla PI uses greedy policy updates, CPI and SPI employ conservative updates instead. Under FA and sampling noise, such CPI/SPI-style updates have empirically been shown to produce better terminal performance \citep{metelli2021safe}. In addition, in tabular or near-tabular settings, CPI/SPI admit per-iteration policy-improvement guarantees by maximizing a lower bound on the performance-difference gap. This section shows how to bring such conservative policy updates into the RPI framework.

Algorithm~\ref{alg:CRPI} describes \CRPI. It keeps RPI’s policy-evaluation step but replaces the greedy policy update with a conservative one: $\mu_{k+1}$ is chosen as a convex combination of $\mu_k$ and the policy greedy with respect to $f_{k+1}$. For $\alpha<1$, Theorem~\ref{thm:CRPI.FA} shows that this update maximizes suitable lower bounds on an \textit{approximate performance-difference gap}.

We now formally define the various notations used in Algorithm~\ref{alg:CRPI}. For a policy $\mu,$  
$P_{\mu}$ is the $SA \times SA$ matrix, whose $((s, a), (\stp, \ap))$-th entry is $P(\stp|s, a) \mu(\ap|\stp).$ We use $\nu \in \bR^{SA}$ for an arbitrary but fixed initial distribution on the $\cS \times \cA$ space, and $d_{\mu}^\tr := (1 - \gamma) \nu^\tr[\bI - \gamma P_{\mu}]^{-1}$ for the $SA$-dimensional discounted state–action occupancy measure associated with $\mu.$ For a policy $\mu,$ $\delta_{\mu} := d_{\mu}^\tr P.$ Further, for policies $\mu, \mup$ and  vector $f \in \bR^{SA},$
\begin{equation}
    \label{e:a.mu.mup.defn}
    a_{\mu}^{\mup}(f) := [P_{\mup}- P_{\mu}]\,f \; \text{ and } \;     A_{\mu}^{\mup}(f) := d_{\mu}^\tr  a_{\mu}^{\mup}(f).
\end{equation}
Note that the $(s, a)$-th coordinate of $a_{\mu}^{\mup}(f),$ i.e.,  
\begin{equation}
    \label{e:a.mu.mup.s.a.defn}
    a_{\mu}^{\mup}(f)(s, a) = \sum_{s', \ap} P(\stp|s,a)  \mup(a'|s')
    \times \Bigl(
    f(s',a') - \langle \mu(\cdot|s'),\, f(s',\cdot)\rangle \Bigr),
\end{equation}
where $\langle \cdot, \cdot \rangle$ denotes the inner product. Thus, if $f$ is an estimate of $Q_{\mu},$ then $a_{\mu}^{\mup}(f)$ approximates the advantage function of $\mu,$ relative to $\mup.$ Finally, for $x \in \bR^{SA},$ 
$\sp(x) := \max_{s, a} x(s, a) - \min_{s, a} x(s, a)$ denotes the span semi-norm of $x$, and $\|\mu_1 - \mu_2\|_{1, \delta_{\mu}} := \sum_{s} \delta_\mu(s)\  \|\mu_1(\cdot|s) - \mu_2(\cdot|s)\|_1.$ 

We now present our main results on \CRPI. The first gives a FA generalization of the performance-difference lemma---the backbone of CPI, SPI, TRPO, and PPO.

\begin{lemma}[Approximate Performance-Difference Lemma]
\label{lem:Approx.Performance.Diff.Lemma}
Suppose $\mu$ and $\mup$ are arbitrary stationary policies and $f \in \bR^{SA}$ is a arbitrary vector such that $T_{\mu} f \geq f.$ Then, for any $H \geq 0,$
%
%
%
\begin{equation}
\label{e:approx.Perf.Diff.Lemma.inequality}
    \nu^\tr Q_{\mup} - \nu^\tr f 
    \geq  \frac{\gamma}{1 - \gamma} d_{\mup}^\tr  a_{\mu}^{\mup}(f)  +   \sum_{h = 0}^H \gamma^h \nu^\tr P_{\mup}^h [T_{\mu}f - f].
\end{equation}
\end{lemma}

\begin{remark}
    \textbf{Comparison with \citep{kakade2002approximately}.} The original performance-difference lemma (in the Q-value-function version) states that 
    \(
        \nu^\tr Q_{\mup} - \nu^\tr Q_{\mu} = \frac{\gamma}{1 - \gamma} d_{\mup}^\tr a_{\mu}^{\mup}( Q_{\mu}).
    \)
    In tabular settings, where $Q_\mu=f$ and hence $T_\mu f=f$, our bound in \eqref{e:approx.Perf.Diff.Lemma.inequality} reduces to the above, since the second term vanishes. Our result's main benefit is that it also applies in practical FA regimes, where $Q_\mu$ is not known exactly. In particular, for any $f$ satisfying $T_\mu f \geq f$---and therefore $Q_\mu \geq f$ by Claim~\ref{c:Q.mu.lower.Bd}---our result yields a computable lower bound on an approximate performance gap between $\mup$ and $\mu$. We say approximate since we use $\nu^\tr f$ in place of $\nu^\tr Q_\mu.$ Our bound also contains additional error terms involving $T_\mu f - f$, which arise purely due to FA, and these terms become smaller as the truncation horizon $H$ increases.
\end{remark}

Next, for mixture policies, we derive two performance-gap bounds, each quadratic in the mixing parameter $\alpha$. These follow by using \eqref{e:approx.Perf.Diff.Lemma.inequality} with $H = 0$ and $H = 1.$ Additional polynomial bounds follow by using higher values of $H,$ but we do not pursue this here.

\begin{proposition}
\label{prop:perf.Diff.Quad.Bd}
    Let $\mu$ and $\mub$ be arbitrary stationary policies and $f \in \bR^{SA}$ be an arbitrary vector such that $T_{\mu} f \geq f.$  Then, for any $\alpha \in [0, 1]$ and the mixture policy $\mup = \alpha \mub + (1 - \alpha) \mu,$ we have
    \begin{equation}
        \nu^\tr  Q_{\mup}  - \nu^\tr f \geq \Psi_1(\alpha)  \geq \Psi_0(\alpha),
    \end{equation}
    where 
    \begin{align}
        \Psi_1(\alpha) = {} & \Psi_1(\alpha; f,  \mu, \mub) \nonumber \\
        = {} & - \frac{ \alpha^2 \gamma^2}{2  (1 - \gamma)^2} \|\mub - \mu\|_{1, \delta_\mu}\ \sp(a_\mu^{\mub} (f))  + \alpha\! \left[\frac{\gamma}{1 - \gamma} A_{\mu}^{\mub} (f) + \gamma \nu^\tr a_{\mu}^{\mub}(T_{\mu}f - f) \right] +  \nu^\tr(\bI + \gamma P_{\mu})[T_{\mu}f - f]  \label{e:perf.Diff.Quad.Bd.H.equals.1} \\
        \intertext{and}
        \Psi_0 (\alpha) = {} & \Psi_0(\alpha; f, \mu, \mub) \nonumber \\
        = {} & - \frac{ \alpha^2 \gamma^2}{2  (1 - \gamma)^2} \|\mub - \mu\|_{1, \delta_\mu}\ \sp(a_\mu^{\mub} (f))  + \frac{ \alpha\gamma}{1 - \gamma} A_{\mu}^{\mub} (f) + \nu^\tr [T_{\mu}f - f]. \label{e:perf.Diff.Quad.Bd}
    \end{align}

\end{proposition}
\begin{remark} 
    \textbf{Comparison with \citep{metelli2021safe}.}
   Our bound in \eqref{e:perf.Diff.Quad.Bd} generalizes equation (P.6) of \citep{metelli2021safe}: (i) it uses $A_{\mu}^{\mub}(f)$ and $a_{\mu}^{\mub}(f)$ instead of the true  advantage function estimates and (ii) it introduces an additional $(T_{\mu} f - f)$ term that captures FA error. Moreover, the specialization \eqref{e:perf.Diff.Quad.Bd.H.equals.1} dominates \eqref{e:perf.Diff.Quad.Bd} (i.e., is pointwise larger) and incorporates FA-dependent coefficients, yielding extra FA-specific guidance for choosing the mixture parameter.
\end{remark}

Our next result provides a per-step improvement guarantee for \CRPI\  under general FA. 

For arbitrary stochastic policies $\mu$ and $\mub$ and a arbitrary vector $f \in \bR^{SA},$ let
\begin{equation}
    \begin{aligned}
        \alS_1 \equiv \alS_1(f, \mu, \mub) := \underset{\alpha \in \bR}{\arg\max}  \Psi_1(\alpha)      = \frac{\eta_1 + \eta_2}{\partial_1}
    \end{aligned}
\end{equation}
and
\begin{equation}
    \begin{aligned}
        \alS_0 \equiv \alS_0(f, \mu, \mub) := {} & \underset{\alpha \in \bR}{\arg\max}\,  \Psi_0(\alpha) = \frac{\eta_1}{\partial_1},\hspace{-1.5em}
    \end{aligned}
\end{equation}
where $\eta_1 \equiv \eta_1(f, \mu, \mub) = (1 - \gamma)  A_{\mu}^{\mub}(f),$ $\eta_2 \equiv \eta_2(f, \mu, \mub) = (1 - \gamma)^2 \nu^\tr a_{\mu}^{\mub}(T_{\mu}f - f),$ and $\partial_1 \equiv \partial_1(f, \mu, \mub) = \gamma \|\mub - \mu\|_{1, \delta_{\mu}}\  \sp(a_{\mu}^{\mub} (f)).$
Finally, let 
\begin{equation}
    \label{e:alS.defn}
    \alS \equiv \alS(f, \mu, \mub) = 
    \begin{cases}
        \alS_1 & \text{if } \alS_1 > 0, \\
        \alS_0 & \text{otherwise}.
    \end{cases}
\end{equation}

\begin{theorem}
\label{thm:perf.bds.FA}
    Let $\mu$ be an arbitrary stochastic policy and $f \in \bR^{SA}$ an arbitrary vector such that $T_{\mu} f \geq f.$ Also, let $\mub$ be a greedy policy with respect to $f$ and $\nu$ an arbitrary initial distribution on  $\cS \times \cA$. Then, $\alS_0 \geq 0$ and the following bounds hold for the mixture policy $\mup = \alpha \mub + (1 - \alpha) \mu$ where $\alpha = \min\{1, \alS\}.$
    \begin{enumerate}
        \item If $\alS_1 > 1,$ then $\nu^\tr Q_{\mup} - \nu^\tr f \geq \Psi_1(1).$

        \item If $\alS_1 \in [0, 1],$ then $\nu^\tr Q_{\mup} - \nu^\tr f \geq \Psi_1(\alS_1).$

        \item If $\alS_1 < 0$ and $\alS_0 > 1,$ then $\nu^\tr Q_{\mup} - \nu^\tr f \geq \Psi_0(1).$

        \item If $\alS_1 < 0$\! and $\alS_0 \leq 1,$\! then $\nu^\tr Q_{\mup} - \nu^\tr f \geq \Psi_0(\alS_0).$
    \end{enumerate}
\end{theorem}

\begin{theorem}
\label{thm:CRPI.FA}
Suppose \(\cF\subseteq \bR^{SA}\) is closed and the initial pair \((\mu_0,f_0)\) satisfies \(T_{\mu_0}f_0 \ge f_0\). Then all conclusions of Theorem~\ref{thm:RPI.FA} hold for the \CRPI\ sequence \((f_k,\mu_k)_{k\ge 0}\). In particular, for every \(k\ge 0\), $T_{\mu_k} f_{k+1} \ge f_{k+1}.$
Consequently, Theorem~\ref{thm:perf.bds.FA} applies to the gap $\nu^\tr Q_{\mu_{k+1}} - \nu^\tr f_{k+1}$ for every \(k\ge 0\) and every initial distribution \(\nu\) on \(\cS\times\cA\).
\end{theorem}


\begin{remark} 
    Ideally, at iteration $k,$ one would prefer an improvement guarantee on $\nu^\tr Q_{\mu_{k + }} - \nu^\tr Q_{\mu_k}.$ However, with FA, $Q_{\mu_k}$ can only be known approximately. Our result therefore guarantees improvement in terms of $f_{k + 1}$, the certified underestimator to $Q_{\mu_k}.$ These bounds are, to our knowledge, the first per-step guarantees for FA and coincide with \cite[Corollary~5]{metelli2021safe} in the tabular case where $f_{k + 1} = Q_{\mu_k}.$
\end{remark}


We end with \CRPI's convergence rate, where we show that the suboptimality gap converges at $O(1/k)$ rate to a neighborhood of $0$ and stays there  thereafter.

\begin{assumption}\label{ass:positivity}
There exists $\Delta>0$ such that $
d_{\mu_k}(s,a)\ge \Delta$ for $k \geq 0$ and $(s, a)$ with $ d_{\mu^*}(s,a)>0.$ 
\end{assumption}

\begin{theorem}
\label{prop:conv_rate}
Suppose that $T_{\mu_0}f_0 \geq f_0 \geq 0,$ and that $\|\cdot\|$ is strictly monotone. Further suppose that,  for every $(\mu, g)$ with $T_\mu g \ge g$, the set $\mathcal H(\mu,g):=\{f\in\cF:\; T_\mu f\ge f\ge g\}$ has a greatest element $f^{\max}(\mu,g)$ satisfying $Q_\mu-\epsilon\ones \le f^{\max}(\mu,g)\le Q_\mu$ for some $\epsilon>0,$ where $\ones$ is all ones vector. Then, under Assumption~\ref{ass:positivity}, if \CRPI\ is run with $\alpha_k = \frac{(1 - \gamma)^2 A_{\mu}^{\mub}(f_{k + 1})}{2 R_{\max}},$ 
$g_k:=\nu^\top(\QS-Q_{\mu_k})$ satisfies $g_k \leq \max\left\{2C_1/k, \delta\right\},$ 
where $\delta = \sqrt{2C_1 (C_2 \epsilon^2 + \epsilon)} + C_2 \epsilon^2 + \epsilon = O(\sqrt{\epsilon} + \epsilon^2 )$ with 
$C_1 = \frac{16 R_{\max} \gamma}{(1 - \gamma)^3 \Delta^2}$ and $C_2 = \frac{(1 - \gamma) \Delta^2}{4 R_{\max} \gamma}.$ Moreover, the interval $[0, \delta]$ is invariant: $g_k \leq \delta \implies g_{k + 1} \leq \delta$. 
\end{theorem}

\section{PROOF SKETCHES}
\label{s:proofs}

Here we give an outline of our proofs for Theorems \ref{thm:RPI.FA}, \ref{thm:CRPI.FA}, and \ref{prop:conv_rate}, and a complete proof of Lemma \ref{lem:Approx.Performance.Diff.Lemma}. All the other details can be found in Appendix \ref{s:apdx.Proofs}.

\begin{proof}[Sketch of Proof of Theorem~\ref{thm:RPI.FA}]
We first show in Claim~\ref{c:Q.mu.lower.Bd} that the constraint $T_\mu f \ge f$ implies $Q_\mu \ge f$. 
This follows from the monotonicity of $T_\mu$ and the fact that $(T_\mu)^m f \to Q_\mu$ as $m \to \infty$. Using this result, we prove the four statements as follows. 

\textbf{(1) Feasibility.}
Using induction, we show that the constraint set in \eqref{e:GUIDE.PE.Opt} remains feasible at every iteration. 
The initialization ensures feasibility at $k=0$. 
Assuming $f_k$ is feasible, Claim~\ref{c:Q.mu.lower.Bd} then yields $f_k \le Q_{\mu_k}$, which bounds the feasible set. 
Since the constraint set is closed and bounded, it is compact, and hence an optimizer $f_{k+1}$ exists. 
Moreover, the optimizer satisfies
\begin{equation}
\label{eq:f_k+1_feasible_sketch}
T_{\mu_k} f_{k+1} \ge f_{k+1} \ge f_k.
\end{equation}

Now, the policy $\mu_{k+1}$ is greedy with respect to $f_{k+1}$, so $T_{\mu_{k+1}} f_{k+1} = T f_{k+1}$. 
Using $T f \ge T_{\mu_k} f$, we get
\begin{equation}
\label{eq:feasibility_preservation_sketch}
T_{\mu_{k+1}} f_{k+1} = T f_{k+1} \ge T_{\mu_k} f_{k+1} \ge f_{k+1},
\end{equation}

which shows that $f_{k+1}$ satisfies the constraints in \eqref{e:GUIDE.PE.Opt} for the $k + 1$ iteration, as desired. 

\textbf{(2) Monotonicity and lower bound.}
The inequality $f_{k+1} \ge f_k$ in \eqref{eq:f_k+1_feasible_sketch} implies that $(f_k)_{k \geq 0}$ is monotonically non-decreasing. 
Separately, since $T_{\mu_k} f_k \ge f_k$ holds at every iteration as shown in \eqref{eq:feasibility_preservation_sketch}, applying Claim~\ref{c:Q.mu.lower.Bd} yields $Q_{\mu_k} \ge f_k$ for all $k,$ as desired.

\textbf{(3) Convergence and sub-optimality gap.} Since $(f_k)_{k \geq 0}$ is monotonically non-decreasing and bounded above by $Q^\star$ through $f_k \le Q_{\mu_k} \le Q^\star$, the limit $f_\infty := \lim_{k \to \infty} f_k$ exists. Using continuity of the Bellman operator and feasibility of $(f_k)$, we pass to the limit in the constraint $T_{\mu_k} f_k \ge f_k$ to obtain $T_{\mu_\infty} f_{\infty} = T f_\infty \ge f_\infty.$ We then use contraction of $T$ and $T_{\mu_\infty}$ to relate $f_\infty$ to $Q^\star$ and $Q_{\mu_\infty}$. In particular, we show that $\|f_\infty - Q^\star\|_\infty \leq \frac{\|T f_\infty - f_\infty\|_\infty}{1-\gamma}.$ Now $T f_\infty = T_{\mu_\infty} f_\infty$ and $f_\infty \leq Q_{\mu_\infty} \leq \QS$ yield
\begin{equation*}
    \|Q_{\mu_\infty} -  \QS\|_\infty 
    \leq {} \|f_\infty - \QS\|_\infty 
    \leq {}  \frac{\|T f_\infty - f_\infty\|_\infty}{1 - \gamma} = {} \frac{\|T_{\mu_\infty} f_\infty - f_\infty\|_\infty}{1 - \gamma}.
\end{equation*}

\textbf{(4) Partial Bellman optimality.}
Since the number of deterministic policies is finite, there is a policy $\mu$ that appears infinitely often in $(\mu_k)$. 
So, there is a subsequence $(k_n)$ such that $\mu_{k_n} = \mu$ for all $n$. At these indices, feasibility implies $T f_{k_n} = T_\mu f_{k_n} \ge f_{k_n}$. 
Passing to the limit using continuity of $T$ and $T_\mu$, we get $T f_\infty = T_\mu f_\infty \ge f_\infty.$ For the sake of contradiction, suppose that $T_\mu f_\infty > f_\infty$ holds strictly. Then, by continuity and the room-to-improve property of $\cF$ at $f_\infty$, there exists a function $f \in \cF$ such that $f > f_\infty$ and $T_\mu f > f$. 
Consequently, at any index $k$ with $\mu_k = \mu,$ the monotonicity of $\|\cdot\|$ would imply that the solution $f_{k + 1}$ of the optimization problem in \eqref{e:GUIDE.PE.Opt} would have satisfied $f_{k + 1} > f_\infty,$ a contradiction.
\end{proof}

We now prove Lemma~\ref{lem:Approx.Performance.Diff.Lemma}, the approximate performance-difference lemma. 

\begin{proof}[Proof of Lemma~\ref{lem:Approx.Performance.Diff.Lemma}]
    First observe that 
    \begin{align*}
        & Q_{\mup} - f \\
        & = Q_{\mup} - T_{\mu}f + T_{\mu} f - f\\
        &\overset{(a)}{=} {} T_{\mup} Q_{\mup} - T_{\mu}f  + T_{\mu}f - f \\
        %
        %
        %
        & \overset{(b)}{=}  {} \gamma P_{\mup} [Q_{\mup} - f] + \gamma a_{\mu}^{\mup}(f)  + T_{\mu}f - f \\
        &\overset{(c)}{=} {} \gamma [\bI - \gamma P_{\mup}]^{-1}  a_{\mu}^{\mup}(f)  + [\bI - \gamma P_{\mup}]^{-1} [T_{\mu}f - f], 
    \end{align*}
    where (a) follows since $T_{\mup} Q_{\mup} = Q_{\mup},$ (b) follows from the definitions of $T_{\mup},$  $T_{\mu},$ and $a_\mu^{\mup}(f)$, and (c) follows by taking the first term to the left and then multiplying $[\bI - \gamma P_{\mup}]^{-1}$ on both sides. Multiplying the last relation on both sides by $\nu^\tr$ then gives
    \begin{equation}    \label{e:Q_mup.f.exact.relation}
        \nu^\tr [Q_{\mup} - f] = \frac{\gamma}{1 - \gamma} d_{\mup}^\tr  a_{\mu}^{\mup}(f) + \nu^\tr [\bI - \gamma P_{\mup}]^{-1} [T_{\mu}f - f].
    \end{equation}
    Now, $\nu^\tr[\bI - \gamma P_{\mup}]^{-1} [T_\mu f - f] = \sum_{h = 0}^\infty \gamma^h \nu^\tr P_{\mup}^h [T_\mu f - f].$
    Also, $T_{\mu}f \geq f,$ and $P_{\mup}^h$ and $\nu$ are made up of non-negative entries; hence,  $\sum_{h = H + 1}^\infty 
        \gamma^h \nu^\tr P_{\mup}^h [T_\mu f - f] \geq 0.$ 
    The desired result now follows. 
\end{proof}

\begin{proof}[Sketch of Proof of Theorem~\ref{thm:CRPI.FA}]
We use induction to first show that \CRPI's $(f_k, \mu_k)$ pairs satisfy $T_{\mu_k} f_k \geq f_k,$ $k \geq 0.$ The $k = 0$ case holds due to initialization. Suppose $T_{\mu_k} f_k \geq f_k$ for some $k \geq 0.$ Then, the solution $f_{k + 1}$ to \eqref{e:CRPI.PE.Opt} exists and  satisfies $T_{\mu_k} f_{k + 1} \geq f_{k + 1}.$ Using arguments as in \eqref{eq:feasibility_preservation_sketch}, it then follows that $T_{\mub_k} f_{k + 1} \geq f_{k + 1}.$ Since $\mu_{k + 1} = \alpha \mub_k + (1 - \alpha)\mu_k$ for some $\alpha \in [0, 1],$ we then have $\alpha T_{\bar{\mu}_k}f_{k+1} + (1-\alpha)T_{\mu_k}f_{k+1} \geq f_{k + 1}.$ From the definition of $T_{\mu_k}$ and $T_{\mub_k}$ and using the fact that $\alpha P_{\mu_k} + (1 - \alpha) P_{\mub_k} = P_{\mu_{k + 1}},$ it then follows that $T_{\mu_{k + 1}}f_{k + 1} \geq f_{k + 1},$ as desired. 

By reasoning analogous to the proof of Theorem~\ref{thm:RPI.FA}, the conclusions of its first three statements can be shown to hold for \CRPI\ as well. The fourth statement does not carry over as directly: RPI generates deterministic policies, whereas \CRPI\ may produce stochastic ones. Nevertheless, the argument establishing $Tf_\infty \gep f_\infty$ in Theorem~\ref{thm:RPI.FA}'s proof adapts to \CRPI\ due to the compactness of the policy simplex. 

Finally, for any $k \geq 0,$ to invoke Theorem~\ref{thm:perf.bds.FA} for $\nu^\tr Q_{\mu_{k + 1}} - \nu^\tr f_{k + 1},$ we only need $T_{\mu_k} f_{k + 1} \geq f_{k + 1},$ which we have already established above. 
\end{proof}


\section{EXPERIMENTS}

We evaluate our methods against three model-based baselines: CPI, USPI, and AMPI-Q. We conduct our experiments  on two environments with linear FA: the inventory control problem \citep{bertsekas2012dynamic} (dense rewards) and the chain walk \citep{lagoudakis2003least} (sparse rewards). 
We also test a model-free deep RL instantiation of RPI using the DDPG framework \citep{ddpg} on MuJoCo continuous-control tasks \citep{brockman2016openai, todorov2012mujoco}.

For each environment we report learning curves, terminal performance, and the area under the learning curve (AUC) as a measure of sample efficiency. 
Additional results on CartPole-v1 and InvertedPendulum-v5 appear in our earlier work \citep{icc_rpi}. 
Implementation details—such as initialization (\ref{sec:init_linear_fa}), environment descriptions (\ref{sec:env_details}), and computational resources and solvers (\ref{sec:comp_res_solvers})—are provided in the appendix. All code and experimental outputs are publicly available at \url{https://github.com/EshwarSR/RPI}.

    \begin{figure*}[t]
\centering

\begin{minipage}[t]{0.65\textwidth}
\centering
\includegraphics[width=\linewidth]{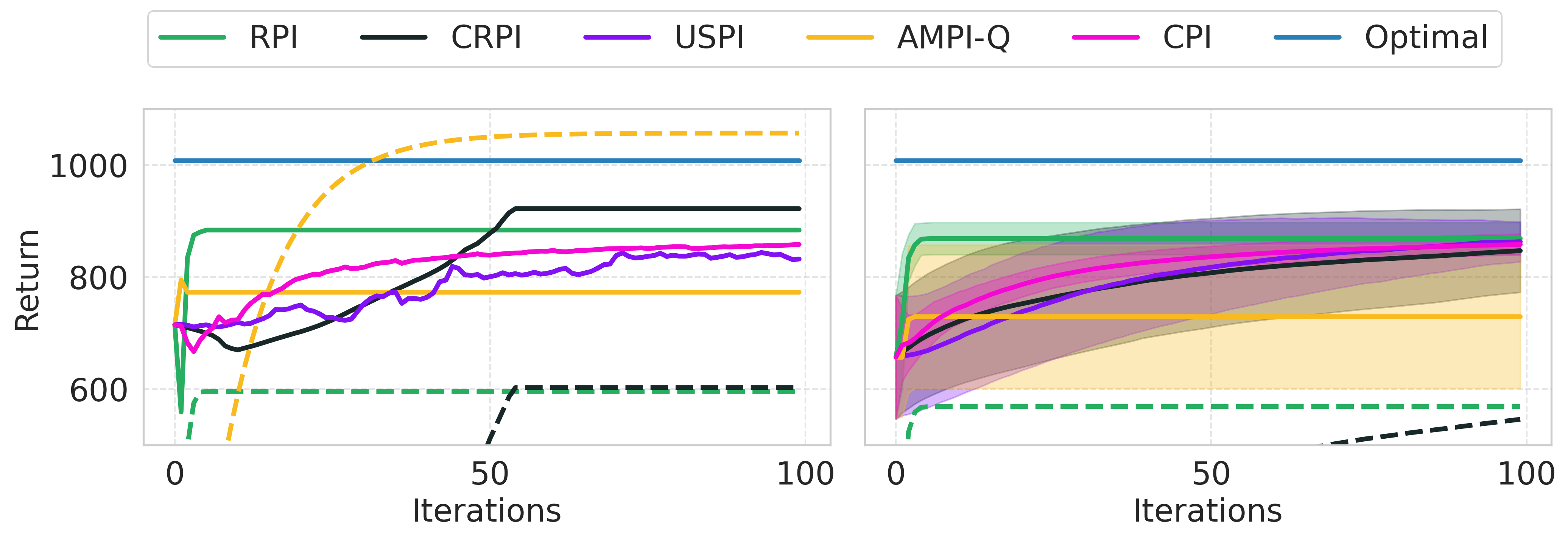}
\end{minipage}\hfill
\begin{minipage}[t]{0.33\textwidth}
\vspace*{-9.5\baselineskip}
\centering
\small
\setlength{\tabcolsep}{1pt}
\renewcommand{\arraystretch}{1.1}
\begin{tabular}{lcc}
\toprule
\multicolumn{3}{c}{\textbf{Env: Inventory Control}}\\
\midrule
\textbf{Algo.} &
\makecell{\textbf{AUC} $(\times 10^3)$} &
\makecell{\textbf{Terminal}\\\textbf{Perf. $(\times 10^2)$}}\\
\midrule
RPI & $\mathbf{85.70 \pm 2.80}$ & $\mathbf{8.69 \pm 0.29}$\\
CRPI & $78.44 \pm 8.97$ & $8.47 \pm 0.74$\\
USPI & $78.78 \pm 7.56$ & $8.63 \pm 0.36$\\
AMPI\text{-}Q & $72.08 \pm 12.47$ & $7.29 \pm 1.28$\\
CPI & $80.89 \pm 1.82$ & $8.58 \pm 0.19$\\
\bottomrule
\end{tabular}
\end{minipage}

\vspace{0.25cm}

\begin{minipage}[t]{0.65\textwidth}
\centering
\includegraphics[width=\linewidth]{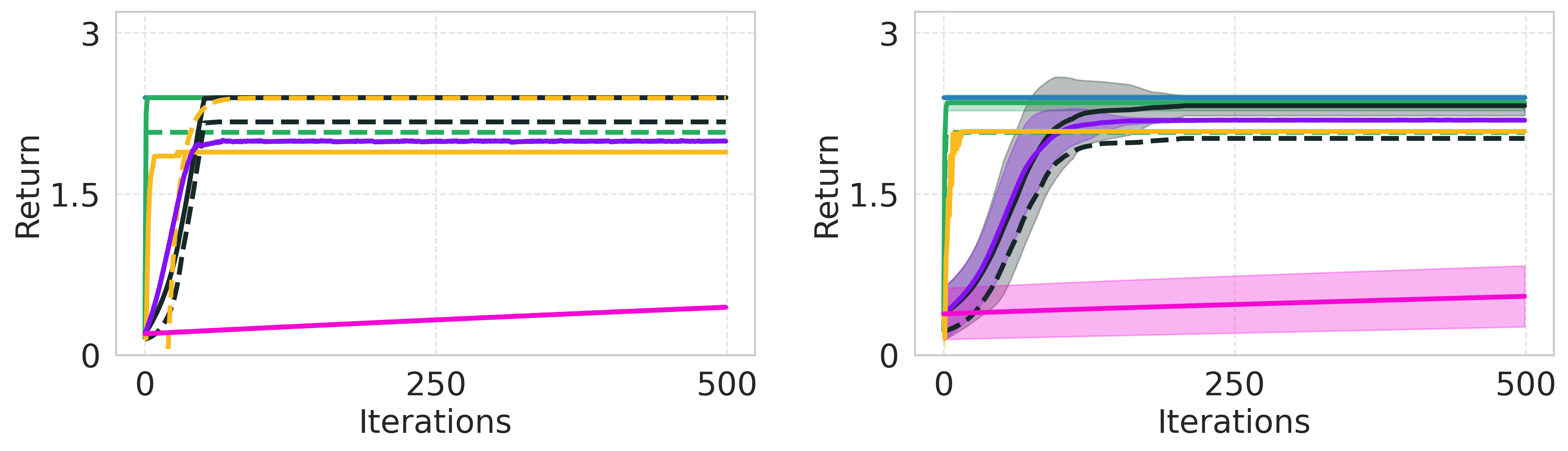}
\end{minipage}\hfill
\begin{minipage}[t]{0.33\textwidth}
\vspace*{-8\baselineskip}
\centering
\small
\setlength{\tabcolsep}{3pt}
\renewcommand{\arraystretch}{1.1}
\begin{tabular}{lcc}
\toprule
\multicolumn{3}{c}{\textbf{Env: Chain Walk}}\\
\midrule
\textbf{Algo.} &
\makecell{\textbf{AUC} $(\times 10^2)$} &
\makecell{\textbf{Terminal}\\\textbf{Perf.}}\\
\midrule
RPI & $\mathbf{11.71 \pm 0.36}$ & $\mathbf{2.35 \pm 0.07}$\\
CRPI & $10.42 \pm 0.79$ & $2.32 \pm 0.09$\\
USPI & $9.94 \pm 0.34$ & $2.19 \pm 0.01$\\
AMPI\text{-}Q & $10.32 \pm 0.01$ & $2.08 \pm 0.00$\\
CPI & $2.34 \pm 1.31$ & $0.55 \pm 0.28$\\
\bottomrule
\end{tabular}
\end{minipage}

\caption{
Inventory Control and Chain Walk with linear function approximation.
\textbf{Left:} Training curve of a single representative run (solid: true return, dashed: estimated return).
\textbf{Center:} Averaged training curves across random seeds (100 for Inventory Control, 25 for Chain Walk; solid: mean return, shaded: mean $\pm$ 1 std).
\textbf{Right:} Key metrics table reporting terminal performance and AUC (mean $\pm$ std).
\textbf{Summary:} RPI and CRPI maintains value estimates that lower bound the true returns, while those of CPI and USPI are very close to true returns. Nevertheless, 
RPI achieves the best average terminal performance and AUC across runs, indicating faster and more sample-efficient learning.
CRPI exhibits larger variance across runs in Inventory Control, which leads to trajectories that outperform RPI, as illustrated in the representative run.
AMPI-Q tends to overestimate and can exceed the optimal value, while CPI performs poorly in the sparse-reward Chain Walk setting.
}
\label{fig:rpi_crpi_combined}
\end{figure*}

\subsection{Model-Based Experiments}
\label{subsec:model_based_experiments}

Here we describe our inventory control (dense reward) and chain walk (sparse reward) experiments. 

\paragraph{Inventory Control.}
We consider the inventory-control setup of \citet{bertsekas2012dynamic} with $M=49$ (so $|\cS|=|\cA|=50$), unit cost $c=5$, holding cost $h=1$, selling price $p=10$, and discount factor $\gamma=0.9$. Demand is uniform on $\{0,\dots,M\}$. We use linear FA with feature dimension $d=75$, where the entries of $\Phi \in \mathbb{R}^{SA \times d}$ are sampled uniformly from $[1,5]$.

Figure~\ref{fig:rpi_crpi_combined} (top row) shows the results. The main highlight is that, under FA, preserving a lower-bound structure can be more useful than closely tracking true values. Although CPI and USPI produce estimates often very close to the true returns, they do not translate this into better policy performance. By contrast, RPI attains the best average terminal performance and AUC, suggesting that its conservative updates lead to faster and more reliable improvement. CRPI remains competitive but exhibits higher variance across runs.

\paragraph{Chain Walk.}
We next consider the Chain Walk domain \citep{lagoudakis2003least}, a sparse-reward MDP with $N=50$ states arranged in a linear chain and actions \texttt{Left} and \texttt{Right}. The agent moves in the intended direction with probability $0.9$ and in the opposite direction with probability $0.1$. Rewards are sparse, obtained only in two states located $N/4$ from the left and right boundaries. We use linear FA with feature dimension $d=90$, where entries are sampled uniformly from $[1,5]$, and discount factor $\gamma=0.9$. Performance in this domain is sensitive to feature choice. For a fair comparison, we sample 10 candidate feature matrices, evaluate each algorithm over 25 random seeds on each matrix, and report results corresponding to the best-performing matrix.

Figure~\ref{fig:rpi_crpi_combined} (bottom) reports results for this task. RPI achieves the highest terminal performance and AUC on average, while CRPI is competitive. 
AMPI-Q again exhibits value overestimation, and CPI performs poorly due to very slow learning.

\begin{figure*}[t]
\centering

\begin{subfigure}{0.95\textwidth}
    \centering
    \includegraphics[width=\linewidth]{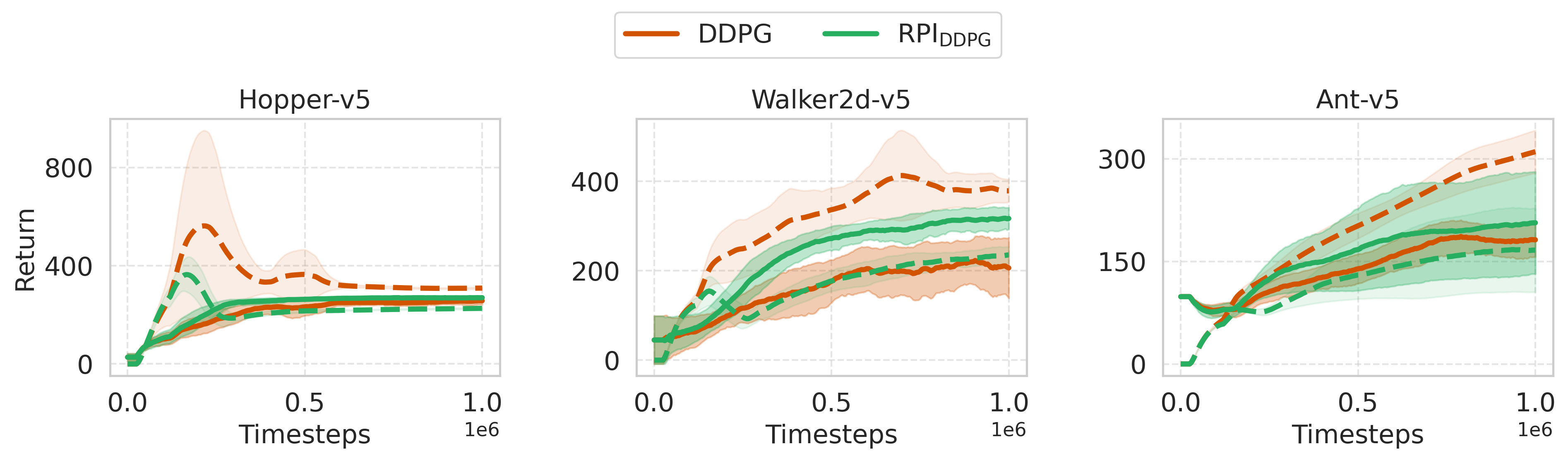}
\end{subfigure}

\vspace{0.8em}

\begin{subfigure}{0.95\textwidth}
\centering
\setlength{\tabcolsep}{5pt}

\begin{tabular}{llccccc}
\toprule
\textbf{Metric} & \textbf{Algorithm} & \textbf{Swimmer} & \textbf{Hopper} & \textbf{HalfCheetah} & \textbf{Walker2d} & \textbf{Ant} \\
\midrule

\multirow{2}{*}{\makecell[l]{Terminal\\Performance}}
& DDPG
& 19.4 $\pm$ 5.8
& 257.6 $\pm$ 9.7
& 564.8 $\pm$ 64.7
& 200.3 $\pm$ 34.1
& 182.8 $\pm$ 7.6 \\

& \rpiddpg
& \textbf{22.3 $\pm$ 1.8}
& \textbf{270.9 $\pm$ 2.0}
& \textbf{632.0 $\pm$ 50.4}
& \textbf{319.9 $\pm$ 17.1}
& \textbf{208.2 $\pm$ 73.5} \\

\midrule

\multirow{2}{*}{AUC ($\times 10^6$)}
& DDPG
& 20.1 $\pm$ 2.7
& 219.3 $\pm$ 10.3
& 483.7 $\pm$ 58.2
& 165.7 $\pm$ 4.8
& 144.4 $\pm$ 8.4 \\

& \rpiddpg
& \textbf{21.0 $\pm$ 2.6}
& \textbf{241.5 $\pm$ 6.5}
& \textbf{550.2 $\pm$ 48.2}
& \textbf{245.0 $\pm$ 10.6}
& \textbf{163.4 $\pm$ 50.5} \\

\bottomrule
\end{tabular}

\end{subfigure}

\caption{
Performance comparison between DDPG and \rpiddpg{} on MuJoCo environments.
\textbf{Top:} Learning curves for Hopper-v5, Walker2d-v5, and Ant-v5 (mean $\pm$ 1 std; solid: return, dashed: critic estimate). Remaining environments are in Appendix~\ref{sec:appendix_all_mujoco_exps}.
\textbf{Bottom:} Terminal performance and AUC (mean $\pm$ std) across all environments.
\textbf{Summary:} \rpiddpg{} maintains lower-bound value estimates, while DDPG often overestimates. \rpiddpg{} matches DDPG on simpler tasks and outperforms DDPG on harder environments.
}

\label{fig:mujoco_results}

\end{figure*}

\subsection{Model-Free Experiments}
\label{subsec:model_free_experiments}

We now demonstrate the applicability of our framework in deep RL settings, for which we develop \textsc{RPI\textsubscript{DDPG}}. This method replaces the standard Mean-Squared-Bellman-Error (MSBE) critic loss of DDPG with a custom loss function which enforces RPI's Bellman-constrained optimization.
Specifically, our inequality constraints are incorporated into the critic loss using a penalty formulation, where the penalty weight is adjusted dynamically during training. 
Implementation details are provided in Appendix~\ref{sec:deep_rpi}. 

We evaluate \rpiddpg{} against the standard DDPG algorithm on five tasks: Swimmer-v5, Hopper-v5, HalfCheetah-v5, Walker2d-v5, and Ant-v5. Figure~\ref{fig:mujoco_results} shows the learning curves for Hopper-v5, Walker2d-v5, and Ant-v5 averaged over five seeds. The plots of remaining environments are provided in Appendix~\ref{sec:appendix_all_mujoco_exps}. 

The results support the same message as in our linear-FA experiments: enforcing a conservative critic improves reliability without sacrificing performance. Across environments, the critic estimates produced by \rpiddpg{} mostly remain below the empirical returns, in line with the lower-bound structure induced by our method; when small violations occur early in training, the adaptive penalty quickly suppresses them. By contrast, DDPG often exhibits overestimation, particularly in Walker2d-v5 and Ant-v5.

Overall, \rpiddpg{} performs competitively with or better than DDPG across all tasks. 
In simpler environments such as Swimmer-v5 and Hopper-v5 both methods achieve similar performance, while in the remaining environments \rpiddpg{} attains higher returns and learns faster. 
The summary table confirms this trend: \rpiddpg{} achieves higher terminal performance and larger AUC across all environments.

\section{CONCLUSION}

We introduced RPI and its conservative variant, CRPI, to address the breakdown of classical PI guarantees under FA. The core idea is a Bellman-constrained policy-evaluation step that produces certified lower bounds and restores monotonic value estimates. We further derive a new performance-difference lower bound that allows CRPI to carry out conservative policy updates with per-step improvement guarantees.  
Empirically, these ideas translate into strong performance. RPI performs best overall, achieving the highest average terminal performance and AUC across Inventory Control and Chain Walk. CRPI remains competitive: it closely tracks RPI in Chain Walk, while in Inventory Control its conservative updates lead to higher variance but can also outperform RPI in some runs. Overall, our work offers a principled bridge between tabular-style reliability guarantees and practical RL with FA. Our guarantees, however, impose a constraint: when the function class admits no feasible improving direction, the method may stagnate. However, this behavior is not a limitation but a consequence of enforcing reliability—it prevents harmful updates. We illustrate this phenomenon in Appendix~\ref{sec:limitations}.

\subsubsection*{Acknowledgements}
 
We thank anonymous reviewers for their valuable comments to improve our work. We also thank Aniruddha Mukherjee for helping run some of the simulations. Eshwar's research is supported in part by grant from the National Payments Corporation of India (NPCI) to Indian Institute of Science (IISc) and Prime Minister's Research Fellowship. Gugan Thoppe's research is supported in part by grants from the Walmart Centre for Tech Excellence at IISc, the Indo-French Centre for the Promotion of Advanced Research Grant (CEFIPRA; Project 7102-1), Kotak IISc FinTech Grant, DST-SERB's Core Research Grant (CRG/2021/008330), and the Pratiksha Trust Young Investigator Award. Aditya Gopalan's research is supported in part by the Centre for Networked Intelligence (a Cisco Corporate Social Responsibility (CSR) Initiative), IISc. Both Gugan and Aditya are also supported by the ANRF ARG grant (ANRF/ARG/2025/011952/ENS).

\bibliography{refs}

\clearpage
\appendix
\thispagestyle{empty}

\onecolumn
\aistatstitle{Appendix}

\section{PROOFS}
\label{s:apdx.Proofs}

We begin with Theorem~\ref{thm:RPI.FA}'s proof.  First, we derive a key relation from the constraints in \eqref{e:GUIDE.PE.Opt}. 

\begin{claim}
\label{c:Q.mu.lower.Bd}
    For a policy $\mu$ and a vector $f \in \bR^{SA},$ the condition $T_\mu f \geq f$ implies $Q_\mu \geq f.$
\end{claim}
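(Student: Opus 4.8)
The plan is to exploit the two classical structural properties of the Bellman operator $T_\mu$ from \eqref{eq:T_mu}: it is \emph{monotone} and it is a $\gamma$-contraction in $\|\cdot\|_\infty$ whose unique fixed point is $Q_\mu$. Monotonicity is immediate from the explicit form in \eqref{eq:T_mu}: if $Q \geq Q'$ coordinate-wise, then since $\gamma \geq 0$ and $\cP(s'|s,a)\mu(a'|s') \geq 0$, we get $T_\mu Q(s,a) \geq T_\mu Q'(s,a)$ for every $(s,a)$. The contraction property and the identification of $Q_\mu$ as the fixed point, i.e. $T_\mu Q_\mu = Q_\mu$, are standard facts about discounted MDPs that I would simply cite.

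The core of the argument is then a monotone-iteration bootstrap. Starting from the hypothesis $T_\mu f \geq f$, apply $T_\mu$ to both sides and use monotonicity to obtain $T_\mu^2 f \geq T_\mu f \geq f$. Iterating, an easy induction on $n$ shows that the sequence $(T_\mu^n f)_{n \geq 0}$ is non-decreasing and, in particular, $T_\mu^n f \geq f$ for all $n \geq 0$. Since $T_\mu$ is a $\gamma$-contraction with $\gamma < 1$, the Banach fixed-point theorem gives $T_\mu^n f \to Q_\mu$ (in $\|\cdot\|_\infty$, hence coordinate-wise) as $n \to \infty$. Passing to the limit in the coordinate-wise inequality $T_\mu^n f \geq f$ — weak inequalities are preserved under limits — yields $Q_\mu \geq f$, which is the claim.

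I do not anticipate a genuine obstacle here; the only point requiring a modicum of care is the passage to the limit, where one must note that the convergence $T_\mu^n f \to Q_\mu$ is in a norm (equivalently, entrywise on the finite index set $\cS \times \cA$), so the entrywise inequalities survive. Everything else is bookkeeping: verifying monotonicity from the formula and invoking the textbook contraction property of $T_\mu$.
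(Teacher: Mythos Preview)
Your proposal is correct and follows essentially the same argument as the paper: iterate $T_\mu$ using monotonicity to get $(T_\mu)^m f \geq f$ for all $m$, then pass to the limit $Q_\mu = \lim_{m\to\infty}(T_\mu)^m f$. The paper's proof is just a more condensed version of exactly this.
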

\begin{proof}
The given condition and the monotonicity of $T_\mu$ imply $(T_\mu)^m f \geq \cdots \geq T_\mu f \geq f$
for any $m \geq 0.$ Hence, $Q_\mu = \lim_{m \to \infty} (T_{\mu} )^m f \geq f,$ as desired.
\end{proof}

\begin{proof}[Proof of Theorem~\ref{thm:RPI.FA}]
We now use Claim~\ref{c:Q.mu.lower.Bd} and induction to prove the first statement. The condition  $T_{\mu_0} f_0 \geq f_0$ ensures that $f_0$ is feasible with respect to the constraints in \eqref{e:GUIDE.PE.Opt} for $k = 0$. Now, suppose $f = f_k$ satisfies the constraints in \eqref{e:GUIDE.PE.Opt} for some arbitrary $k \geq 0.$ Then, Claim~\ref{c:Q.mu.lower.Bd} shows that
\begin{equation}
\label{e:constraint.implies.Qmu.upper.Bd}
\begin{aligned}
    \{f \in \cF: & T_{\mu_k} f \geq f \geq f_k\} \\
    = {} & \cF \cap \{f \in \bR^{SA}: T_{\mu_k} f \geq f \geq f_k\} \\
    \subseteq {} & \cF \cap \{f \in \bR^{SA}: Q_{\mu_k} \geq f \geq f_k\}.
\end{aligned}
\end{equation}
Since $\cF$ and $\{f \in \bR^{SA}: T_{\mu_k} f \geq f \geq f_k\}$ are closed in $\bR^{SA}$ and $\{f \in \bR^{SA}: Q_{\mu_k} \geq f \geq f_k\}$ is bounded, the constraint set $\{f \in \cF: T_{\mu_k} f \geq f \geq f_k\}$ itself is closed and bounded and, hence, compact. Because the objective function is continuous,  compactness implies that a solution $f_{k + 1}$ to \eqref{e:GUIDE.PE.Opt} exists and it satisfies
\begin{equation}
\label{e:Phi.u_k.monotonicity}
    T_{\mu_k} f_{k + 1} \geq f_{k + 1} \geq f_k.
\end{equation}
The new policy $\mu_{k + 1}$ obtained subsequently from $f_{k + 1}$ then satisfies
\begin{equation}
\label{e:RPI.induction.constraint.feasibility}
    T_{\mu_{k + 1}} f_{k + 1} = T f_{k + 1} \geq T_{\mu_k} f_{k + 1} \geq f_{k + 1},
\end{equation}
which shows that $f_{k + 1}$ satisfies the constraints in \eqref{e:GUIDE.PE.Opt} for the $k + 1$ iteration, as desired. 

Now consider the second statement. From the rightmost inequality in \eqref{e:Phi.u_k.monotonicity}, it follows that $(f_k)_{k \geq 0}$ is non-decreasing. On the other hand, for any $k \geq 0,$ our first statement shows that $f_k$ satisfies the constraints in \eqref{e:GUIDE.PE.Opt}; Claim~\ref{c:Q.mu.lower.Bd} then shows that $Q_{\mu_k} \geq f_k.$ Therefore, $f_k$ is a lower bound on $Q_{\mu_k}$ for any $k \geq 0,$ and $(f_k)_{k \geq 0}$ is monotonically non-decreasing, as desired. 

With regard to the third statement, since $(f_k)_{k \geq 0}$ is monotonically non-decreasing and $f_k \leq Q_{\mu_k} \leq \QS,$ it follows that $f_{\infty} := \lim_{k \to \infty} f_k$ exists. Separately, for any $k \geq 0,$ since $f_{k + 1} \geq f_k$ and since $T_{\mu_k}$ is monotone, we have $T_{\mu_k} f_{k + 1} \geq T_{\mu_k} f_k.$ This, when combined with \eqref{e:RPI.induction.constraint.feasibility}, then shows that $(T_{\mu_k} f_k)_{k \geq 0}$ is monotone. Therefore, 
\begin{equation}
\label{e:T.infty.mu.infty.f.infty.rel}
    T_{\mu_\infty} f_{\infty} \overset{(a)}{=} T f_{\infty} \overset{(b)}{=} T(\lim_{k \to \infty} f_k) \overset{(c)}{=} \lim_{k \to \infty} T f_k  \overset{(d)}{\geq } \lim_{k \to \infty} T_{\mu_k} f_k \overset{(e)}{\geq} \lim_{k \to \infty} f_k \overset{(f)}{=} f_{\infty},
\end{equation}
where (a) holds because $\mu_\infty$ is greedy with respect $f_{\infty},$ (b) and (f) hold from the definition of $f_{\infty},$ (c) holds since $T$ is continuous,
(d) holds since $Tf_k  \geq T_{\mu_k} f_k$ and $(T_{\mu_k} f_k)_{k \geq 0}$ is monotone and, hence, convergent, while (e) holds since $f_k$ satisfies the constraints in \eqref{e:GUIDE.PE.Opt} as shown in our first statement. 

Next, since $T\QS = \QS,$ observe that
\begin{multline*}
    \|f_\infty - \QS\|_\infty \leq \|T f_\infty - f_\infty\|_\infty + \|T f_\infty - T \QS\|_\infty \leq \|T f_\infty - f_\infty\|_\infty + \gamma \|f_\infty -  \QS\|_\infty,
\end{multline*}
where the last inequality follows since $T$ is a contraction. Hence,
\begin{equation}
\label{e:Phi.L.QS.Bd}
     \|f_\infty - \QS\|_\infty \leq \frac{\|T f_\infty - f_\infty\|_\infty}{1 - \gamma}.
\end{equation}
Separately, from \eqref{e:T.infty.mu.infty.f.infty.rel}, we have $T_{\mu_\infty} f_\infty \geq f_\infty.$ Hence, 
\begin{equation}
\label{e:f.infty.Q*.Rel}
    f_\infty \leq Q_{\mu_\infty} \leq \QS,
\end{equation}  
where the first inequality follows from Claim~\ref{c:Q.mu.lower.Bd}. Therefore, 
\[
    \|\QS - Q_{\mu_\infty}\|_\infty \overset{(a)}{\leq} \|\QS - f_\infty \|_\infty \overset{(b)}{\leq} \frac{\|T f_\infty - f_\infty\|_\infty}{1 - \gamma} \overset{(c)}{=} \frac{\|T_{\mu_\infty} f_\infty - f_\infty\|_\infty}{1 - \gamma}
\]
where (a) follows from \eqref{e:f.infty.Q*.Rel}, (b) from \eqref{e:Phi.L.QS.Bd}, while (c) since $T f_\infty = T_{\mu_\infty} f_\infty.$

Finally, we discuss the fourth statement on the partial satisfiability of the Bellman equations by $f_\infty.$ Since the number of state-action pairs is finite, we only have finitely many deterministic policies. Hence, among $(\mu_k)_{k \geq 0},$ there exists a deterministic policy (say $\mu$) that repeats infinitely often. That is, there is a subsequence $(k_n)_{n \geq 0}$ such that $\mu_{k_n} = \mu$ for all $n \geq 0.$ Now, since $\lim_{n \to \infty} f_{k_n} = f_{\infty}$ and, at iteration $k_n,$ $T f_{k_n} = T_{\mu} f_{k_n} = T_{\mu_{k_n}} f_{k_n} \geq f_{k_n},$ the continuity of $T$ and $T_\mu$ implies $T f_\infty = T_{\mu} f_\infty \geq f_\infty.$ 
Suppose $T f_\infty = T_{\mu} f_\infty > f_\infty,$ i.e., the strict inequality holds on all coordinates. Let $\eta := \min_{s, a}|T_\mu f_\infty(s, a) - f_\infty(s, a)|,$ where $T_{\mu} f_\infty(s, a)$ and $f_\infty(s,a)$ denote the $(s, a)$-th coordinate of $T_{\mu} f_\infty$ and $f_\infty,$ respectively. Then, from the continuity of $T_\mu,$ it follows that there exist some $\delta$ and $\epsilon$ such that $0 < \delta, \epsilon <  \eta/2$ and, for any $f \in \bR^{SA}$ satisfying $\|f - f_\infty\|_\infty \leq \delta,$ we have $\|T_\mu f - T_{\mu} f_\infty\|_\infty \leq \epsilon.$ Now, the given condition that $\cF$ has room to improve at $f_\infty$ implies there exists a $f \in \cF$ such that $f > f_\infty$ and $\|f - f_\infty\|_\infty \leq \delta.$ Hence, for this $f,$ we have $T_{\mu} f > f.$ Consequently, at any index $k$ with $\mu_k = \mu,$ the monotonicity of $\|\cdot\|$ would imply that the solution $f_{k + 1}$ of the optimization problem in \eqref{e:GUIDE.PE.Opt} would have satisfied $f_{k + 1} > f_\infty,$ a contradiction. This shows that $T f_\infty \gep f_\infty,$ as desired.
\end{proof}

We next derive Propositions~\ref{prop:RPI.PI.generalization} and \ref{prop:RPI.l1.projection}.    

\begin{proof}[Proof of Proposition~\ref{prop:RPI.PI.generalization}]
Since $\cF = \bR^{SA},$ we have that $Q_{\mu} \in \cF$ for any $\mu.$ Now, for iteration $k \geq 0$ of RPI, Claim~\ref{c:Q.mu.lower.Bd} implies that every element in $\{f \in \cF: T_{\mu_k}f \geq f \geq f_k\}$ satisfies $Q_{\mu_k} \geq f \geq f_k$ or, equivalently, $Q_{\mu_k} - f_k \geq f - f_k \geq 0.$ The strict monotonicity of $\|\cdot\|$ then implies that  $\|Q_{\mu_k} - f_k\| \geq \|f - f_k\|.$ Hence, $f_{k + 1} = Q_{\mu_k}.$ Thus, RPI mirrors vanilla PI. Therefore, by invoking the classical convergence proof for PI \citep{bertsekas1996neuro}, we now get $f_\infty = \QS$ and $\mu_\infty$ is the optimal policy, as desired. 
\end{proof}

\begin{proof}[Proof of Proposition~\ref{prop:RPI.l1.projection}]
For any $f$ satisfying the constraints in \eqref{e:GUIDE.PE.Opt}, we have from Claim~\ref{c:Q.mu.lower.Bd} that  $Q_{\mu_k} \geq f \geq f_k.$ Hence, for any weight vector $w \in \bR^{SA}$ with strictly positive values, we have
\begin{align*}
    \|f - f_k\|_{w, 1} = {} & \sum_{s, a} w(s, a) [f(s, a) - f_k(s, a)] \\
    = {} & \sum_{s, a} w(s, a) [f(s, a) - Q_{\mu_k}(s, a)  + Q_{\mu_k}(s, a) - f_k(s, a)]\\
    = {} & -\|Q_{\mu_k} - f\|_{w, 1} + \|f_k - Q_{\mu_k}\|_{w, 1}.
\end{align*}
Since the rightmost term in the last expression is independent of $f,$ the desired result follows. 
\end{proof}

Next we prove Proposition~\ref{prop:perf.Diff.Quad.Bd}. To this end, we need the following technical result that mirrors Lemma~1 of \citep{metelli2021safe}. 

\begin{lemma}
 \label{lem:Discounted.State.Distrn.Diff.Lemma}
 For any initial distribution $\nu$ and stationary policies $\mup$, $\mu$, we have
 \begin{equation}
 \label{e:disc.state.diff.lemma}
 \|d_{\mup} - d_{\mu}\|_1 \leq \frac{\gamma}{1 - \gamma} \|\mup - \mu\|_{1, \delta_\mu}. 
 \end{equation}
 \end{lemma}

\begin{proof}
    We have
    \begin{align*}
        d_{\mup}^\tr - d_{\mu}^\tr = {} & (1 - \gamma) \nu^\tr[\bI - \gamma P_{\mup}]^{-1} -  (1 -  \gamma) \nu^\tr [\bI - \gamma P_\mu]^{-1} \\
        = {} & (1 - \gamma) \nu^\tr [\bI + \gamma  (\bI - \gamma P_{\mup})^{-1} P_\mup] -  (1 -  \gamma) \nu^\tr [\bI + \gamma (\bI- \gamma P_\mu)^{-1} P_\mu ]\\
        = {} & \gamma [d_{\mup}^\tr P_{\mup} - d_{\mu}^\tr P_{\mu}] \\
        = {} & \gamma [d_{\mup}^\tr - d_{\mu}^\tr ]P_{\mup} + \gamma d_{\mu}^\tr [P_{\mup} - P_{\mu}] \\
        = {} & \gamma d_{\mu}^\tr [P_{\mup} - P_{\mu}] [\bI - \gamma P_{\mup}]^{-1}.
    \end{align*}
    
    Hence, it follows that 
    \begin{align}
        \|d_{\mup} - d_{\mu}\|_1 = {} & \|d_{\mup}^\tr - d_{\mu}^\tr\|_\infty \nonumber \\
        \leq  {} & \gamma \|d_{\mu}^\tr [P_\mup - P_{\mu}]\|_\infty\ \|[\bI - \gamma P_{\mu}]^{-1}\|_\infty \nonumber \\
        = {} & \frac{\gamma}{1 - \gamma } \|d_{\mu}^\tr [P_\mup - P_{\mu}]\|_\infty,
        \label{e:dmu.minus.dmup.Bd}
    \end{align}
    where the last inequality follows from the fact that $\|(\bI - \gamma P_\mu)^{-1}\|_\infty \leq \sum_{k = 0}^\infty \gamma^k \|(P_\mu)^k\|_\infty = 1/(1 - \gamma).$

    Now, 
    \begin{align}
        \|d_{\mu}^\tr [P_\mup - P_\mu]\|_\infty = {} & \sum_{s', a'} \Big|\sum_{s, a} d_\mu(s, a)   P(s'|s, a) [\mup(a'|s') - \mu(a'|s')] \Big| \nonumber \\
        \leq {} & \sum_{s', a'} \sum_{s, a} d_{\mu}(s, a) P(s'|s, a) \Big|\mup(a'|s') - \mu(a'|s') \Big| \nonumber \\
        = {} & \sum_{s, a, s'} d_{\mu}(s, a) P(s'|s, a) \sum_{a'}  \Big|\mup(a'|s') - \mu(a'|s') \Big| \nonumber \\
        \leq {} & \sum_{s, a, s'} d_{\mu}(s, a) P(s'|s, a)  \|\mup(\cdot|s') - \mu(\cdot|s')\|_1 \nonumber\\
        = {} & \|\mup - \mu\|_{1, \delta_\mu} \label{e:mup.minus.mu.Bd}.
    \end{align}
    where $\delta_\mu$ is defined as in Section~\ref{subsec:CRPI}. The desired result now follows from \eqref{e:dmu.minus.dmup.Bd} and \eqref{e:mup.minus.mu.Bd}.
\end{proof}

\begin{proof}[Proof of Proposition~\ref{prop:perf.Diff.Quad.Bd}]
    Starting from Lemma~\ref{lem:Approx.Performance.Diff.Lemma}, we get 
    \begin{equation}
    \label{e:nu.Q.mup.nu.f.Bd1}
        \frac{1-\gamma}{\gamma} (\nu^\tr Q_{\mup}  - \nu^\tr f) 
            \ge d_{\mup}^\tr\,a_\mu^{\mup}(f)  + \frac{1 - \gamma}{\gamma} \sum_{h = 0}^H \gamma^h \nu^\tr P_{\mup}^h [T_{\mu}f - f].
    \end{equation}
    Now, for any $\mu_b,$ observe that 
    \begin{align}
            d_{\mup}^\tr a_{\mu}^{\mup}(f) 
            & =  d_{\mu_b}^\tr\,a_\mu^{\mup}(f) + (d_{\mup}^\tr - d_{\mu_b}^\tr) a_\mu^{\mup}(f) \nonumber \\
            &\overset{(a)}{\geq} d_{\mu_b}^\tr a_\mu^{\mup}(f) - \bigl|(d_{\mup}-d_{\mu_b})^\tr a_\mu^{\mup}(f)\bigr| \nonumber \\
            &\overset{(b)}{\geq} d_{\mu_b}^\tr\,a_\mu^{\mup}(f) - \|(d_{\mup}-d_{\mu_b})^\tr \|_1\frac{\mathrm{sp}(a_\mu^{\mup}(f))}{2} \nonumber \\
            &\overset{(c)}{\geq} d_{\mu_b}^\tr a_\mu^{\mup}(f)
            -  \frac{\gamma}{1 - \gamma} \|\mup - \mu_b\|_{1, \delta_{\mu_b}}\frac{\mathrm{sp}(a_\mu^{\mup}(f))}{2}, \label{e:nu.Q.mup.nu.f.Bd2}
    \end{align}
where $(a)$ follows from the fact that $x + y \geq x -|y|$ for any real numbers $x$ and $y,$ $(b)$ follows from \cite[Corollary 2.4]{haviv1984}, and $(c)$ follows from Lemma~\ref{lem:Discounted.State.Distrn.Diff.Lemma}.

Separately, for $\mu_b = \mu$ and $\mup = \alpha \mub + (1 - \alpha) \mu,$ observe that $P_{\mup} = \alpha P_{\mub} + (1 - \alpha) P_{\mu}$ and, hence, 
    \begin{equation}
    \label{e:nu.Q.mup.nu.f.Bd3}
        \begin{aligned}
           d_{\mu_b}^\tr a_{\mu}^{\mup}(f)
           = {} & \alpha d_{\mu}^\tr [P_{\mub}- P_{\mu}]f
           = \alpha A_{\mu}^{\mub}(f) \\
           \|\mup(\cdot|s) - \mu(\cdot|s)\|_1 = {} &  \alpha \|\mub(\cdot|s) - \mu(\cdot|s)\|_1\\
           \sp(a_\mu^{\mup}(f)) = {} & \alpha\  \sp(a_\mu^{\mub}(f)).
        \end{aligned}
    \end{equation}

    Hence, by combining \eqref{e:nu.Q.mup.nu.f.Bd1}, \eqref{e:nu.Q.mup.nu.f.Bd2}, and \eqref{e:nu.Q.mup.nu.f.Bd3}, it follows that
    \begin{equation}
     \frac{1-\gamma}{\gamma} (\nu^\tr Q_{\mup}  - \nu^\tr f) 
    \ge \alpha A_{\mu}^{\mub}(f) - \frac{\gamma}{2(1 - \gamma)} \alpha^2 \|\mub - \mu\|_{1, \delta_\mu} \sp(a_{\mu}^{\mub}(f))  + \frac{1 - \gamma}{\gamma} \sum_{h = 0}^H \gamma^h \nu^\tr P_{\mup}^h [T_{\mu}f - f].
    \end{equation}

    By substituting $H  =0$ and $1,$ the desired results are now easy to see.
\end{proof}

We next prove Theorems~\ref{thm:perf.bds.FA} and \ref{thm:CRPI.FA}.

\begin{proof}[Proof of Theorem~\ref{thm:perf.bds.FA}]
For \(\alpha\in[0,1]\), $\mup = \alpha \mub + (1-\alpha)\mu.$ Since \(T_\mu f \ge f\), Proposition~\ref{prop:perf.Diff.Quad.Bd} shows that
\[
    \nu^\tr Q_{\mup_\alpha} - \nu^\tr f \ge \Psi_1(\alpha) \ge \Psi_0(\alpha)
\]
for every \(\alpha\in[0,1]\). Further, \(\Psi_1\) and \(\Psi_0\), being concave quadratic functions of \(\alpha\), attain their maxima at
\[
    \alS_1
    =
    \frac{
        (1 - \gamma) A_{\mu}^{\mub}(f)
        +
        (1 - \gamma)^2 \nu^\tr a_{\mu}^{\mub}(T_{\mu}f - f)
    }{
        \gamma \|\mub - \mu\|_{1, \delta_{\mu}}\, \sp(a_{\mu}^{\mub}(f))
    }
    \quad \text{and} \quad
    \alS_0
    =
    \frac{
        (1 - \gamma) A_{\mu}^{\mub}(f)
    }{
        \gamma \|\mub - \mu\|_{1, \delta_{\mu}}\, \sp(a_{\mu}^{\mub}(f))
    },
\]
respectively. Since \(\mub\) is greedy with respect to \(f\), we have \(A_{\mu}^{\mub}(f)\ge 0\), and hence \(\alS_0 \ge 0\).

The rest of the proof reduces to locating these maximizers relative to the admissible interval \([0,1]\). Recall from \eqref{e:alS.defn} that the theorem chooses
\[
    \alpha = \min\{1,\alS\},
    \qquad\text{where}\qquad
    \alS =
    \begin{cases}
        \alS_1, & \text{if } \alS_1 > 0,\\
        \alS_0, & \text{otherwise.}
    \end{cases}
\]

We now consider the four possible cases.

\begin{enumerate}
    \item If \(\alS_1 > 1\), then the maximizer of \(\Psi_1\) lies to the right of \(1\). Hence, \(\Psi_1\) is increasing on \([0,1]\), and the best admissible choice is the full greedy step \(\alpha=1\), which is exactly the coefficient selected by the theorem. Therefore,
    \[
        \nu^\tr Q_{\mup} - \nu^\tr f \ge \Psi_1(1).
    \]

    \item If \(\alS_1 \in [0,1]\), then the maximizer of \(\Psi_1\) lies inside the admissible interval. Thus, the best admissible choice is the partial step \(\alpha=\alS_1\), and so
    \[
        \nu^\tr Q_{\mup} - \nu^\tr f \ge \Psi_1(\alS_1).
    \]

    \item If \(\alS_1 < 0\) and \(\alS_0 > 1\), then \(\Psi_1\) is decreasing on \([0,1]\), and hence does not suggest a positive step. In this regime, we instead appeal to the weaker but still valid lower bound \(\Psi_0\). Since its maximizer lies to the right of \(1\), \(\Psi_0\) is increasing on \([0,1]\), so the best admissible choice is again \(\alpha=1\), which is what the theorem selects. Therefore,
    \[
        \nu^\tr Q_{\mup} - \nu^\tr f \ge \Psi_0(1).
    \]

    \item If \(\alS_1 < 0\) and \(\alS_0 \le 1\), then, as in the previous case, we work with \(\Psi_0\). Now its maximizer lies in \([0,1]\), so the best admissible choice is the partial step \(\alpha=\alS_0\). Hence,
    \[
        \nu^\tr Q_{\mup} - \nu^\tr f \ge \Psi_0(\alS_0).
    \]
\end{enumerate}
This completes the proof.
\end{proof}

\begin{proof}[Proof of Theorem~\ref{thm:CRPI.FA}]
As argued in Section~\ref{s:proofs}, we already know that
\[
    T_{\mu_k} f_k \ge f_k \qquad \forall k \ge 0.
\]
Consequently, the first three conclusions of Theorem~\ref{thm:RPI.FA} hold. It remains to prove the fourth one.

Let $f_\infty := \lim_{k\to\infty} f_k,$ and let \(\mu_\infty\) be an arbitrary subsequential limit of \((\mu_k)\); that is, for some subsequence \((k_j)\), we have $\mu_{k_j} \to \mu_\infty.$  Since \(f_k \to f_\infty\), we also have \(f_{k_j} \to f_\infty\). Moreover, $T_{\mu_{k_j}} f_{k_j} \ge f_{k_j},$ for all $j \geq 0.$ Passing to the limit gives $T_{\mu_\infty} f_\infty \ge f_\infty.$ 

We now show that equality must hold in at least one coordinate. Suppose, for contradiction, that
\begin{equation}
\label{eq:strict.CRPI}
    T_{\mu_\infty} f_\infty > f_\infty
    \qquad\text{componentwise.}
\end{equation}
By continuity of \(T_{\mu_\infty}\) and the room-to-improve assumption, it follows by arguing as in the proof of Theorem~\ref{thm:RPI.FA} that there exists \(f \in \cF\) such that $f > f_\infty$ and $T_{\mu_\infty} f > f.$ Since \(\mu_{k_j} \to \mu_\infty\) and \(f\) is fixed, continuity of \(T_\mu f\) in \(\mu\) implies that, for all sufficiently large \(j\), $T_{\mu_{k_j}} f > f.$ Also, because \(f_k \uparrow f_\infty\) componentwise and \(f > f_\infty\), we have \(f > f_{k_j}\) for all large \(j\). Thus, for such \(j\), the vector \(f\) is feasible for the CRPI evaluation step at iteration \(k_j\).

Let \(f_{k_j+1}\) be the optimizer returned at iteration \(k_j\). By optimality,
\begin{equation}
\label{e:f_k_{j+1}.f_k_j.rel1}
    \|f_{k_j+1} - f_{k_j}\| \ge \|f - f_{k_j}\|.
\end{equation}
On the other hand, since \(f_{k_j} \le f_{k_j+1} \le f_\infty\), we have
\[
    0 \le f_{k_j+1} - f_{k_j} \le f_\infty - f_{k_j}.
\]
Further, \(f > f_\infty\) implies
\[
    f - f_{k_j} > f_\infty - f_{k_j} \ge 0.
\]
Hence, by the monotonicity property of the norm, 
\begin{equation}
\label{e:f_k_{j+1}.f_k_j.rel2}
    \|f - f_{k_j}\| > \|f_\infty - f_{k_j}\|
    \quad\text{and}\quad
    \|f_{k_j+1} - f_{k_j}\| \le \|f_\infty - f_{k_j}\|.
\end{equation}
Combining the inequalities in \eqref{e:f_k_{j+1}.f_k_j.rel1} and \eqref{e:f_k_{j+1}.f_k_j.rel2} yields
\[
    \|f_{k_j+1} - f_{k_j}\|
    \ge
    \|f - f_{k_j}\|
    >
    \|f_\infty - f_{k_j}\|
    \ge
    \|f_{k_j+1} - f_{k_j}\|,
\]
a contradiction.

Therefore, \eqref{eq:strict.CRPI} is impossible. This completes the proof.
\end{proof}

We now prove Theorem~\ref{prop:conv_rate}. We begin with a simple inequality.

\begin{lemma}
\label{lem:relu_square}
Let $[t]_+ := \max\{t,0\}$. Then, for any $x \ge 0$ and $c \ge 0$,
\[
    [x-c]_+^2 \;\ge\; \frac{x^2}{4} - c^2.
\]
\end{lemma}

\begin{proof}
We consider two cases.

\paragraph{Case 1: $0 \leq x \le c$.}
In this case, $[x-c]_+ = 0$. Since $0 \leq x \le c$, we have $x^2/4 \le c^2$, and hence
\[
    [x-c]_+^2 = 0 \;\ge\; \frac{x^2}{4} - c^2.
\]

\paragraph{Case 2: $x > c$.}
Here, $[x-c]_+ = x-c$. A direct calculation shows that
\[
    (x-c)^2 - \Big(\frac{x^2}{4} - c^2\Big)
    = \frac{3}{4}x^2 - 2cx + 2c^2
    = \frac{3}{4}\Big(x - \frac{4c}{3}\Big)^2 + \frac{2}{3}c^2 \;\ge\; 0.
\]
Thus, $(x-c)^2 \ge \frac{x^2}{4} - c^2$.

Combining the two cases completes the proof.
\end{proof}

We now show a relation between the relative advantage function estimate and performance difference between the two policies.

\begin{lemma}\label{lem:A.4}
Let $\mu$ and $\mu'$ be arbitrary policies. Further, let $f$ satisfy $T_\mu f \ge f$, and let $\bar\mu$ be greedy with respect to $f$. Define 
\begin{equation}
    \label{e:beta.defn}
    \beta(\mu,\mu') := \min_{(s,a):\, d_{\mu'}(s,a)>0} \frac{d_\mu(s,a)}{d_{\mu'}(s,a)}.
\end{equation}
Then,
\begin{equation}
    \label{e:A.mu.mub.Bd}
    A_\mu^{\bar\mu}(f) \ge \beta(\mu,\mu') \frac{1-\gamma}{\gamma} \Big[\nu^\top Q_{\mu'} - \nu^\top f - \nu^\top [I-\gamma P_{\mu'}]^{-1}[T_\mu f-f] \Big].
\end{equation}
Moreover, if $\|Q_\mu-f\|_\infty \le \epsilon$, then
\begin{equation}
    \label{e:A.mu.mub.eps.Bd}
    A_\mu^{\bar\mu}(f) \ge \beta(\mu,\mu') \frac{1-\gamma}{\gamma} \Big[\nu^\top Q_{\mu'} - \nu^\top Q_\mu - \frac{\epsilon}{1-\gamma}\Big].
\end{equation}
\end{lemma}
\begin{proof}
    The relation in \eqref{e:A.mu.mub.Bd} follows by observing that 
    \begin{align*}
        A_{\mu}^{\mub}(f) \overset{(a)}{=} {} & \sum_{s, a} d_{\mu}(s, a)  \sum_{s', \ap} P(\stp|s,a)  \mub(a'|s') \Bigl(    f(s',a') - \langle \mu(\cdot|s'),\, f(s',\cdot)\rangle \Bigr) \\
        \overset{(b)}{=} {} & \sum_{s, a} d_{\mu}(s, a)  \sum_{s'} P(\stp|s,a)  \max_{\ap} \Bigl(
        f(s',a') - \langle \mu(\cdot|s'),\, f(s',\cdot)\rangle \Bigr) \\
        \overset{(c)}{\geq} {} & \beta(\mu, \mup) \sum_{s, a} d_{\mup}(s, a)  \sum_{s'} P(\stp|s,a)  \max_{\ap} \Bigl(
        f(s',a') - \langle \mu(\cdot|s'),\, f(s',\cdot)\rangle \Bigr) \\
        \overset{(d)}{\geq} {} & \beta(\mu, \mup) \sum_{s, a} d_{\mup}(s, a)  \sum_{s', \ap} P(\stp|s,a) \mup(\ap|s')   \Bigl(
        f(s',a') - \langle \mu(\cdot|s'),\, f(s',\cdot)\rangle \Bigr) \\
        \overset{(e)}{=} {} & \beta(\mu, \mup) d_{\mup}^\tr a_{\mu}^{\mup}(f) \\
        \overset{(f)}{=} {} & \beta(\mu, \mup) \frac{1 - \gamma}{\gamma} \Big[ \nu^\tr Q_{\mup} - \nu^\tr f - \nu^\tr [\bI - \gamma P_{\mup}]^{-1}[T_{\mu}f - f] \Big],
    \end{align*}
where (a) follows from the definition of $A_{\mu}^{\mub}(f),$ (b) follows since $\mub$ is greedy with respect to $f,$ (c) follows from the fact that the inner sum is non-negative, (d) holds since the maximum is larger than any average, (e) follows from the definition of $a_{\mu}^{\mup}(f),$ while (f) follows from \eqref{e:Q_mup.f.exact.relation}.

The relation in \eqref{e:A.mu.mub.eps.Bd} follows by additionally observing that 
\begin{align*}
    \nu^\tr Q_{\mup} - & {} \nu^\tr f  - \nu^\tr [\bI - \gamma P_{\mup}]^{-1}[T_{\mu}f - f] \\
    \overset{(a)}{=} {} & \nu^\tr Q_{\mup} - \nu^\tr Q_{\mu} + \nu^\tr[Q_{\mu} - f]  - \nu^\tr [\bI - \gamma P_{\mup}]^{-1}[T_{\mu}f - f] \\
   \overset{(b)}{\geq} {} & \nu^\tr Q_{\mup} - \nu^\tr Q_{\mu} - \nu^\tr [\bI - \gamma P_{\mup}]^{-1}[T_{\mu}f - f] \\
   \overset{(c)}{\geq} {} & \nu^\tr Q_{\mup} - \nu^\tr Q_{\mu}  - \frac{\epsilon}{1 - \gamma},
\end{align*}
where (a) follows by adding and subtracting $\nu^\tr Q_{\mu},$ (b) follows since $Q_{\mu} \geq f,$ while (c) holds since $(1 - \gamma) \nu^\tr [\bI - \gamma P_{\mup}]^{-1}$ is a distribution, and  since $\|Q_{\mu} - f\|_\infty \leq \epsilon$ implies $Q_{\mu}(s, a) - \epsilon \leq f(s, a) \leq T_{\mu} f(s, a) \leq Q_{\mu}(s, a)$ and, hence, $T_{\mu} f(s, a) -f(s, a) \leq \epsilon.$

The desired results now follow.   %
\end{proof}

\begin{lemma}
\label{lem:A.mu.mub.Bd.al.less.than.1}
    Let $R_{\max} = \max_{s, a}r(s, a).$ Also, let $f$ and $\mu$ be such that  $0 \leq f \leq Q_{\mu}.$ Finally, let $\mub$ be the greedy policy with respect to $f$ and $\mup$ be the mixture policy $\tilde{\alpha} \mub + (1 - \tilde{\alpha}) \mu,$ where
    \[
        \tilde{\alpha} = \frac{(1 - \gamma)^2 A_{\mu}^{\mub}(f)}{2 R_{\max}}.
    \]
    Then, $0 \leq \tilde{\alpha} \leq 1,$ and 
    \[
        \nu^\tr Q_{\mup} - \nu^\tr f \geq \frac{(1 - \gamma) \gamma \Big(A_\mu^{\mub}(f)\Big)^2}{4 R_{\max}}.
    \]
\end{lemma}
\begin{proof}
    First, let $\alpha \in [0, 1]$ be arbitrary, and let $\mup = \alpha \mub + (1 - \alpha) \mu.$ Then, 
    \begin{align}
        \nu^\tr Q_{\mup} - \nu^\tr f \overset{(a)}{\geq} {} & - \frac{ \alpha^2 \gamma^2}{2  (1 - \gamma)^2} \|\mub - \mu\|_{1, \delta_\mu}\ \sp(a_\mu^{\mub} (f)) + \frac{ \alpha\gamma}{1 - \gamma} A_{\mu}^{\mub} (f) + \nu^\tr [T_{\mu}f - f] \\
        \overset{(b)}{\geq} {} & - \frac{ \alpha^2 \gamma^2}{2  (1 - \gamma)^2} \|\mub - \mu\|_{1, \delta_\mu}\ \sp(a_\mu^{\mub} (f)) + \frac{ \alpha\gamma}{1 - \gamma} A_{\mu}^{\mub} (f) \label{e:nu.Q.mup.nu.f.Bd.drop.T.mu.f.term},
    \end{align}
    where (a) follows from \eqref{e:perf.Diff.Quad.Bd}, while (b) follows since $T_{\mu} f \geq f.$

    We now simplify the expression in \eqref{e:nu.Q.mup.nu.f.Bd.drop.T.mu.f.term}. For any policies $\mu, \mup,$ and any state $s \in \cS$,
    \begin{equation}
        \| \mup(\cdot|s)-\mu(\cdot|s) \|_{1} \le 2
    \end{equation}
    and, hence, 
    \begin{equation}
    \label{e:mu.mup.gap.Bd}
        \|\mup - \mu\|_{1, \delta_\mu} \leq 2.
    \end{equation}
    
    Next, we show that, for $f$ and $\mu$ such that $0 \leq f \leq Q_{\mu}$ and $\mub$ greedy with respect to $f,$ we have
    \begin{equation}
    \label{e:sp.a.mu.k.bar.mu.k.Bd}
        \sp \big(a^{\mub}_{\mu}(f)\big) \le \frac{R_{\max}}{1-\gamma}.
    \end{equation}
    Clearly, for any $0 \leq f$, $\mu$, and $s \in \cS$,
    \begin{equation}
        \max f(s, a)-\langle \mu(\cdot|s), f(s,\cdot)\rangle \in [0,\, \sp(f)].
    \end{equation}
    Now, since $\mub$ is greedy with respect to $f,$ it follows from the definition given in  \eqref{e:a.mu.mup.s.a.defn} that
    \[
        a_{\mu}^{\mub}(f)(s, a) \in [0,\, \sp(f)]
    \]
    for every state-action pair $(s, a).$ Therefore,
    \[
        \sp(a_{\mu}^{\mub}(f)) \le \sp(f) \le \|f\|_\infty,
    \]
    where the last inequality uses the fact that $0 \leq f.$
    Finally, since \(0 \le f \le Q_\mu\) and \(\|Q_\mu\|_\infty \leq \dfrac{R_{\max}}{1-\gamma}\), we obtain
    \[
        \|f\|_\infty \le \frac{R_{\max}}{1-\gamma}.
    \]
    Using this relation in the previous display yields \eqref{e:sp.a.mu.k.bar.mu.k.Bd}.
    
    By substituting \eqref{e:mu.mup.gap.Bd} and \eqref{e:sp.a.mu.k.bar.mu.k.Bd} in \eqref{e:nu.Q.mup.nu.f.Bd.drop.T.mu.f.term}, we then get
    \begin{equation}
    \label{e:nu.Q.mup.nu.f.Bd.simplified}
        \nu^\tr Q_{\mup} - \nu^\tr f 
        \geq {} - \frac{ \alpha^2 \gamma^2 R_{\max}}{(1 - \gamma)^3}  + \frac{ \alpha\gamma}{1 - \gamma} A_{\mu}^{\mub} (f).
    \end{equation}
    While the expression on the RHS is maximized at 
    \[
        \alpha = \frac{(1 - \gamma)^2 A_\mu^{\bar{\mu}}(f)}{2 \gamma R_{\max}},
    \]
    this value can potentially be larger than $1.$ Hence, we pick the value stated in the statement, i.e.,
    \[
        \tilde{\alpha} = \frac{(1 - \gamma)^2 A_\mu^{\bar{\mu}}(f)}{2 R_{\max}} \leq \frac{1 - \gamma}{2} \leq 1.
    \]
    Substituting this value of $\tilde{\alpha}$ in \eqref{e:nu.Q.mup.nu.f.Bd.simplified} then gives 
    \[
        \nu^\tr Q_{\mup} - \nu^\tr f 
        \geq {} \frac{(1 - \gamma) \gamma (2 - \gamma) \Big(A_{\mu}^{\mub}(f)\Big)^2}{4 R_{\max}}.
    \]
    Since $2 - \gamma \geq 1,$ we get the desired result. 
\end{proof}

\begin{proof}[Proof of Theorem~\ref{prop:conv_rate}]
We first claim that, for every iteration $k \geq 0,$
\begin{equation}
\label{e:Q.mu.k.Q.S.rel}
    Q_{\mu_k} 
    \leq f_{k + 1} + \epsilon \ones 
    \leq Q_{\mu_k} + \epsilon \ones 
    \leq \QS + \epsilon \ones,
\end{equation}
where $\ones$ is the all-ones vector. The last inequality holds since $Q_\mu \leq \QS$ for any policy $\mu$, and the middle inequality follows from $f_{k+1} \le Q_{\mu_k}$ by Theorem~\ref{thm:CRPI.FA}. 

It remains to verify the first inequality. Let $\hat f := f^{\max}(\mu_k,f_k)$ denote the greatest element of $\mathcal H(\mu_k,f_k)$. By assumption,
\[
    Q_{\mu_k} - \epsilon \ones \le \hat f \le Q_{\mu_k}.
\]
Moreover, $\hat f \in \mathcal H(\mu_k,f_k)$, so $T_{\mu_k}\hat f \ge \hat f \ge f_k$. 

For any $f \in \mathcal H(\mu_k,f_k)$, we have $f \le \hat f$, and hence
\[
    0 \le f - f_k \le \hat f - f_k.
\]
If $f \neq \hat f$, the inequality is strict in at least one coordinate. By strict monotonicity of $\|\cdot\|$,
\[
    \|f - f_k\| < \|\hat f - f_k\|.
\]
Thus, $\hat f$ is the unique maximizer of $\|f - f_k\|$ over $\mathcal H(\mu_k,f_k)$, and hence $f_{k+1} = \hat f$. Therefore,
\[
    f_{k+1} \ge Q_{\mu_k} - \epsilon \ones,
\]
which proves the claim.

\vspace{0.5em}

We now establish a recurrence for $g_k := \nu^\top(Q^\star - Q_{\mu_k})$. We show that
\begin{equation}
\label{eq:gk_recursive_rel}
    g_{k+1} \le g_k - \frac{g_k^2}{C_1} + C_2 \epsilon^2 + \epsilon,
\end{equation}
where $C_1 = \frac{16 R_{\max} \gamma}{(1 - \gamma)^3 \Delta^2}$ and 
$C_2 = \frac{(1 - \gamma) \Delta^2}{4 R_{\max} \gamma}$.

To this end, we first show that
\begin{equation}
\label{eq:conv_rate_relative_adv_sq}
    \big(A_{\mu_k}^{\bar\mu_k}(f_{k+1})\big)^2 
    \ge C_3 g_k^2 - C_4 \epsilon^2,
\end{equation}
where $C_3 = \frac{1}{4}\left(\frac{(1-\gamma)\Delta}{\gamma}\right)^2,$ and $C_4 = \frac{\Delta^2}{\gamma^2}.$

We start with \eqref{eq:conv_rate_relative_adv_sq}. Recall that $[x]_+ := \max\{x,0\}.$ Then, 
\begin{align*}
    A_{\mu_k}^{\bar\mu_k}(f_{k+1}) & \overset{(a)}{\ge} 
    \frac{(1 - \gamma) \beta(\mu_k, \mu^\star)}{\gamma} \left(\nu^\top Q_{\mu^\star} - \nu^\top Q_{\mu_k} - \frac{\epsilon}{1-\gamma}\right)\\
    & \overset{(b)}{\ge} \frac{(1 - \gamma) \Delta}{\gamma} \left(\nu^\top Q_{\mu^\star} - \nu^\top Q_{\mu_k} - \frac{\epsilon}{1-\gamma}\right)\\
    & \overset{(c)}{\ge} \frac{(1-\gamma)\,\Delta}{\gamma} \left( g_k - \frac{\epsilon}{1-\gamma} \right)\\
    & \overset{(d)}{\ge} \frac{(1-\gamma)\,\Delta}{\gamma} \left[ g_k - \frac{\epsilon}{1-\gamma} \right]_+,
\end{align*}
where (a) follows from \eqref{e:A.mu.mub.eps.Bd}, (b) follows since $d_{\mu^\star}(s, a) \leq 1$ and since Assumption~\ref{ass:positivity} implies $d_{\mu_k}(s, a) \geq \Delta$ whenever $d_{\mu^\star}(s, a) > 0,$ (c) follows from the definition of $g_k,$ while (d) follows since $\bar\mu_k$ is greedy with respect to $f_{k+1}$, and thus
$A_{\mu_k}^{\bar\mu_k}(f_{k+1}) \ge 0.$ Therefore, it follows that
\begin{align*}
    \left(A_{\mu_k}^{\bar\mu_k}(f_{k+1})\right)^2 
    & \ge \left(\frac{(1-\gamma)\,\Delta}{\gamma} \right)^2\left[ g_k - \frac{\epsilon}{1-\gamma} \right]_+^2\\
    & \overset{(a)}{\ge} \frac{1}{4} \left(\frac{(1-\gamma)\,\Delta}{\gamma} \right)^2 g_k^2 - \left(\frac{(1-\gamma)\,\Delta}{\gamma} \right)^2 \left(\frac{\epsilon}{1-\gamma}\right)^2, 
\end{align*}
where (a) follows from Lemma \ref{lem:relu_square}. This proves \eqref{eq:conv_rate_relative_adv_sq}. 

Next, using the stepsize choice and Lemma~\ref{lem:A.mu.mub.Bd.al.less.than.1}, we get
\begin{align}
\nu^\top Q_{\mu_{k+1}} - \nu^\top f_{k+1}
&\ge \frac{(1-\gamma)\gamma}{4R_{\max}} 
\big(A_{\mu_k}^{\bar\mu_k}(f_{k+1})\big)^2 \nonumber \\
&\ge \frac{g_k^2}{C_1} - C_2 \epsilon^2 \label{eq:conv_rate_approx_perf_diff}.
\end{align}

Now, 
\begin{align*}
g_{k+1}
&= \nu^\top Q^\star - \nu^\top Q_{\mu_{k+1}} \\
&= g_k - (\nu^\top Q_{\mu_{k+1}} - \nu^\top Q_{\mu_k}) \\
&\le g_k - (\nu^\top Q_{\mu_{k+1}} - \nu^\top f_{k+1}) + \epsilon \\
&\le g_k - \frac{g_k^2}{C_1} + C_2 \epsilon^2 + \epsilon,
\end{align*}
where (a) follows \eqref{e:Q.mu.k.Q.S.rel}, while (b) follows from \eqref{eq:conv_rate_approx_perf_diff}. This establishes \eqref{eq:gk_recursive_rel}.

\vspace{0.5em}

Let $b := C_2 \epsilon^2 + \epsilon$ and define
\[
\delta' := \sqrt{2C_1 b}, 
\qquad 
\delta := \delta' + b.
\]

\textbf{Step 1: Decay outside the $\delta'$-region.}
If $g_k \ge \delta'$, then $b \le g_k^2/(2C_1)$, and hence
\[
g_{k+1} \le g_k - \frac{g_k^2}{2C_1}.
\]
This implies $g_{k+1} \le g_k$, and
\[
\frac{1}{g_{k+1}} - \frac{1}{g_k} 
\ge \frac{1}{2C_1}.
\]
Summing yields
\[
g_k \le \frac{2C_1}{k}.
\]

\textbf{Step 2: Invariance of the $\delta$-region.}
We show that $[0,\delta]$ is invariant.

If $g_k < \delta'$, then
\[
g_{k+1} \le g_k + b < \delta' + b = \delta.
\]

If $\delta' \le g_k \le \delta$, then
\[
g_{k+1} \le g_k - \frac{g_k^2}{C_1} + b
\le g_k - \frac{g_k^2}{2C_1}
\le g_k \le \delta.
\]

Thus, in all cases,
\[
g_k \le \delta \;\Rightarrow\; g_{k+1} \le \delta.
\]

\textbf{Step 3: Global bound.}
Let $\tau := \inf\{k \ge 0 : g_k < \delta'\}$.

If $\tau > N$, then $g_j \ge \delta'$ for all $j \le N$, so
\[
g_N \le \frac{2C_1}{N}.
\]

If $\tau \le N$, then $g_\tau < \delta' < \delta$, and by invariance,
\[
g_N \le \delta.
\]

Therefore, for all $N \ge 1$,
\[
g_N \le \max\left\{\frac{2C_1}{N}, \delta\right\}.
\]

This completes the proof.
\end{proof}

\section{INITIALIZATION UNDER LINEAR FUNCTION APPROXIMATION}
\label{sec:init_linear_fa}

Our theoretical guarantees require the initial value estimate to satisfy
\[
T_{\mu_0} f_0 \;\ge\; f_0.
\]
In this section, we present a simple initialization procedure that ensures this condition under linear FA.

Suppose $\mathcal{F} = \{f_\theta : \theta \in \mathbb{R}^d\}$, where $f_\theta = \Phi \theta$ for a feature matrix $\Phi \in \mathbb{R}^{|\mathcal{S}||\mathcal{A}| \times d}$. Then, for any $(s,a)$,
\[
f_\theta(s,a) = \phi(s,a)^\top \theta,
\]
where $\phi(s,a) \in \mathbb{R}^d$ denotes the feature vector corresponding to $(s,a)$. We assume that the first column of $\Phi$ is a bias feature, i.e.,
\[
\Phi((s,a),1) = 1 \quad \text{for all } (s,a).
\]

Let $\hat f_0 = \Phi \hat \theta$ be any initial estimate. We construct a shifted parameter vector
\[
\theta = \hat \theta + b e_1,
\]
where $e_1 = (1,0,\ldots,0) \in \mathbb{R}^d$ and $b \in \mathbb{R}$ is a scalar. The corresponding function is
\[
f_0 := \Phi \theta = \hat f_0 + b \ones,
\]
where $\ones$ denotes the all-ones vector.

We now choose $b$ so that $T_{\mu_0} f_0 \ge f_0$. Using the affine property of the Bellman operator \citep[Lemma 2.4]{bertsekas1996neuro}, we have
\[
T_{\mu_0}(\hat f_0 + b \ones)(s,a)
= T_{\mu_0} \hat f_0(s,a) + \gamma b.
\]
Thus, the condition $T_{\mu_0} f_0 \ge f_0$ is equivalent to
\[
T_{\mu_0} \hat f_0(s,a) + \gamma b
\;\ge\;
\hat f_0(s,a) + b,
\quad \forall (s,a).
\]
Rearranging, we obtain
\[
T_{\mu_0} \hat f_0(s,a) - \hat f_0(s,a)
\;\ge\;
(1-\gamma)b.
\]
Therefore, it suffices to choose
\[
b \;\le\;
\min_{(s,a)} \frac{T_{\mu_0} \hat f_0(s,a) - \hat f_0(s,a)}{1-\gamma}.
\]

With this choice, the shifted initialization $f_0 = \hat f_0 + b \ones$ satisfies
\[
T_{\mu_0} f_0 \;\ge\; f_0,
\]
as required.

\section{DEEP RL IMPLEMENTATION OF RPI}
\label{sec:deep_rpi}

We now describe how the policy evaluation step of RPI can be implemented
in a deep reinforcement learning setting. The resulting method replaces the
constrained optimization in \eqref{e:GUIDE.PE.Opt} with a stochastic surrogate
objective that can be optimized using standard deep RL infrastructure.

Recall that the policy evaluation step of RPI computes
\[
f_{k+1} \in 
\arg\max_{f\in\mathcal F} \|f-f_k\|
\quad
\text{s.t.}
\quad
T_{\mu_k}f \ge f \ge f_k .
\]
The constraint $T_{\mu_k}f \ge f$ enforces $Q_{\mu_k} \ge f_{k+1}$, while
$f \ge f_k$ ensures monotonic improvement of the value estimates.

\paragraph{Convex surrogate.} We use a
convex surrogate objective that minimizes the Bellman residual while
retaining the same constraints, that is
\[
\min_{f\in\mathcal F} \|T_{\mu_k}f - f\|^2_2
\quad
\text{s.t.}
\quad
T_{\mu_k}f \ge f \ge f_k  .
\]

Intuitively, the original RPI objective maximizes the gap between the
constraints $f \ge f_k$ while  being feasible. In contrast, the surrogate
objective minimizes the gap between $T_{\mu_k}f \ge f$ while being feasible. 


\paragraph{Lagrangian relaxation.}
We now focus on the problem with only the one sided Bellman constraint. We enforce the Bellman inequality constraint $f - T_\mu f \le 0$ using a Lagrangian formulation. The resulting objective takes the form
\begin{equation}
\label{eq:true_lag}
L(f,\lambda)
=
\|T_\mu f - f\|^2_2
+
\lambda^\top (f - T_\mu f),
\quad
\lambda \ge 0.
\end{equation}

In large state–action spaces maintaining a separate multiplier for every constraint is impractical. Instead, we introduce a shared scalar multiplier $\lambda_1$ and approximate the constraint penalty using the ReLU function $[x]_+ = \max(x,0)$. This yields the penalty approximation
\[
\tilde{L}(f,\lambda)
=
\|T_\mu f - f\|^2_2
+
\lambda_1 \, \mathbf{1}^\top
[f - T_\mu f]_+ ,
\]
where $\mathbf{1}$ is an all-ones vector.

The resulting objective closely resembles standard critic losses used in deep reinforcement learning, with the addition of the penalty term enforcing the RPI constraint. As in off-policy deep RL algorithms such as DDPG, we use sampled $\mathcal{B}-$sized minibatch of transitions $(s_i,a_i,r_i,s_i')$ from the replay buffer to approximate the Bellman operator using the target $y_i = r_i + \gamma Q_{\text{target}}(s_i',a_i').$ Substituting this into the relaxed objective yields the critic loss

\[
\mathcal L^{\text{RPI}}_{\text{critic}}
=
\frac{1}{|\mathcal B|}
\sum_i
\Big(
(Q_{\text{curr}}(s_i,a_i) - y_i)^2
+
\lambda_1 [Q_{\text{curr}}(s_i,a_i) - y_i]_+
\Big).
\]

Importantly, this loss
is very similar to existing deep RL critic losses, making it straightforward
to incorporate into standard algorithms.

\paragraph{Adaptive penalty update.}

The multiplier $\lambda_1$ controls the strength of constraint enforcement.
In the true Lagrangian formulation \eqref{eq:true_lag}, the dual variable would be updated using
gradient ascent as
\[
\lambda^{t+1} =
\left[\lambda^t + \eta (f - T_\mu f)\right]_+ .
\]

This update increases the multiplier when the constraint is violated and
decreases it when the constraint is satisfied. Maintaining such updates for
every state–action pair is infeasible in deep RL. Instead, we adopt a
computationally efficient heuristic that mimics the qualitative behavior of
dual ascent.

Specifically, we monitor the average constraint satisfaction rate
$\bar{s}$ computed over a sliding window of recent training iterations.
Let $s_{\text{target}}$ denote a desired target satisfaction rate.
The shared multiplier is then updated as

\[
\lambda_1
\leftarrow
\begin{cases}
\lambda_1 \alpha_\uparrow & \text{if } \bar{s} < s_{\text{target}} \\
\lambda_1 \alpha_\downarrow & \text{otherwise},
\end{cases}
\]

where $\alpha_\uparrow > 1$ and $\alpha_\downarrow < 1$ are scale-up and
scale-down hyperparameters controlling how aggressively the penalty is
adjusted. Finally, we clamp the multiplier to a reasonable range
$\lambda_1 \in [\lambda_{\min},\lambda_{\max}],$ where $\lambda_{\min} \ge 0$.

\paragraph{Integration with deep RL algorithms.}

The proposed critic loss can be directly integrated into standard deep RL
pipelines. In particular, replacing the critic loss in algorithms such as
DDPG with $\mathcal L^{\text{RPI}}_{\text{critic}}$ produces a modified
algorithm that enforces the RPI Bellman inequality during critic training.
We refer to this variant as RPI\textsubscript{DDPG}. This modification requires no changes to the actor update or the overall
training pipeline, making the proposed method easy to incorporate into
existing deep RL implementations.

\paragraph{Implementation details.}

We implement \rpiddpg{} by modifying the publicly available DDPG implementation from the repository accompanying \citep{td3}.\footnote{\url{https://github.com/sfujim/TD3}} 
Specifically, we replace the standard critic loss with $\mathcal L^{\text{RPI}}_{\text{critic}}$ while keeping all other components unchanged.

The additional hyperparameters introduced by our method are as follows: the initial penalty weight $\lambda_1 = 0.01$, window size $n = 100$, target constraint satisfaction rate $s_\text{target}=0.75$, and $\lambda_1$ is updated every $10^4$ iterations. 
Here, the window size denotes the number of recent iterations over which the average constraint satisfaction is computed. 
The penalty weight is adapted multiplicatively with scale factors $2$ and $0.5$, and is constrained to lie in the range $[10^{-3}, 10^{2}]$.

\section{ENVIRONMENT DETAILS}
\label{sec:env_details}

To evaluate the performance of our proposed algorithms, RPI and CRPI, we benchmark them against USPI, AMPI-Q, and CPI on two environments: Inventory Control and Chain Walk. The details of these environments are described below.

\subsection{Inventory Control}
\label{app:inv_ctrl_mdp}

In this work we consider a variant of the classical inventory control problem, modeled as a Markov Decision Process, where the objective is to maximize the expected long-term reward obtained from managing inventory in the presence of stochastic demand.

The inventory system has a fixed maximum capacity of \( M \) units. The state on day \( t \), denoted by \( s_t \in \mathcal{S} = \{0, 1, \dots, M\} \), represents the number of items currently in stock. The action space is \( \mathcal{A} = \{0, 1, \dots, M\} \), where an action \( a_t \in \mathcal{A} \) denotes the number of items ordered at the beginning of day \( t \). Each day yields a reward composed of three components as follows
\begin{itemize}
    \item \textbf{Procurement cost} for the items ordered,
    \item \textbf{Holding cost} for leftover inventory, and
    \item \textbf{Revenue} from items sold.
\end{itemize}

Let \( c \) denote the unit procurement cost, \( h \) the holding cost per unit of unsold inventory, and \( p \) the unit selling price. Let \( d_t \) denote the demand on day \( t \), sampled from a predefined demand distribution. Define the post-order inventory level as
\[
\hat{s}_t = \min(s_t + a_t, M),
\]
which represents the total inventory available for sale on day \( t \) after ordering. The system evolves according to the transition rule
\[
s_{t+1} = \max(\hat{s}_t - d_t, 0),
\]
reflecting the remaining inventory after meeting demand. The immediate reward is given by
\[
R(s_t, a_t, d_t) = p \cdot \min(\hat{s}_t, d_t) - c \cdot a_t - h \cdot \max(\hat{s}_t - d_t, 0),
\]
where the first term captures revenue from sales, the second term penalizes procurement, and the third penalizes excess inventory.


\subsection{Chain Walk}

The chain walk environment is modelled as an $N$-state linear chain with states labeled $1$ to $N$ (with $N=50$ in our experiments). At each step the agent chooses an action ``Left" $(L)$ or ``Right" $(R)$, moving in the intended direction with probability $p$ and in the opposite direction with probability $1-p$ (with $p=0.9$ in our experiments). A reward of $+1$ is granted only upon entering either of the two target states located $N/4$ steps from each end of the chain; all other transitions yield zero reward. No states are terminal, so the chain can be traversed indefinitely. The initial distribution is taken to be uniform over the state-action space.

\section{ADDITIONAL RESULTS ON MUJOCO ENVIRONMENTS}
\label{sec:appendix_all_mujoco_exps}

\begin{figure}[!t]
\centering

\begin{subfigure}{0.95\textwidth}
    \centering
    \includegraphics[width=\linewidth]{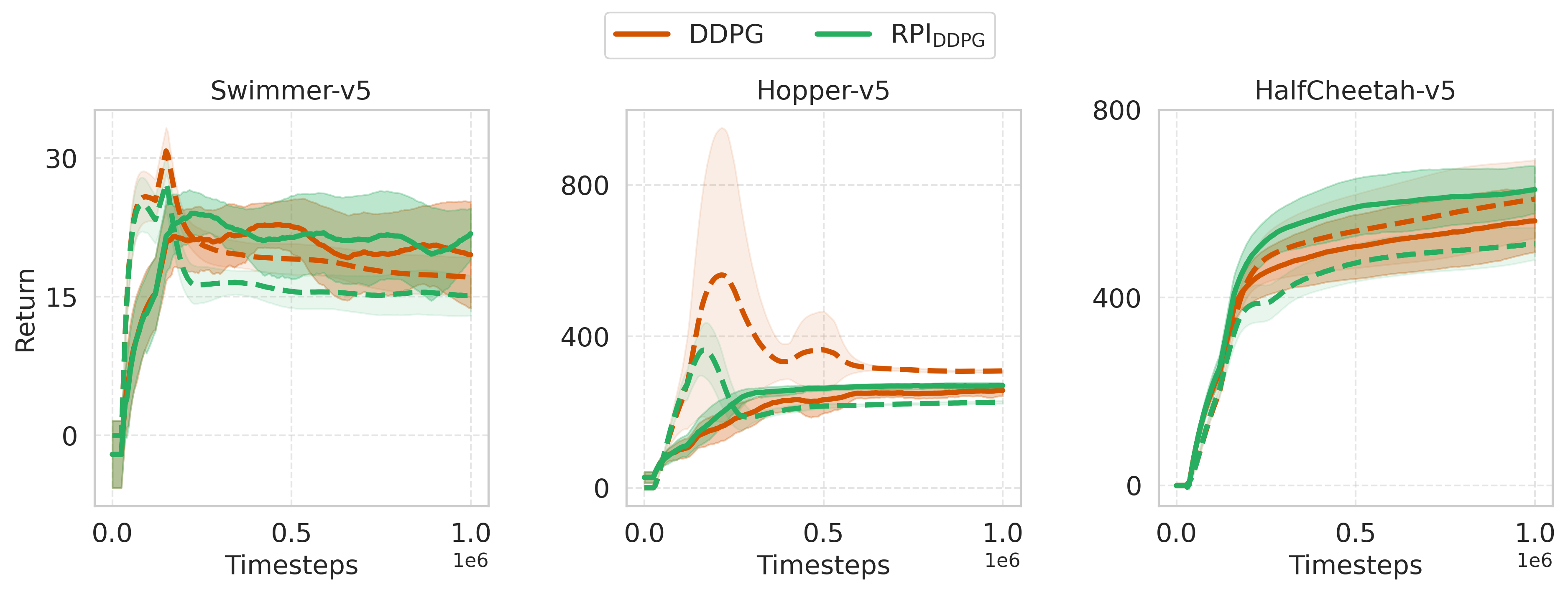}
\end{subfigure}

\vspace{0.6em}

\begin{subfigure}{0.66\textwidth}
    \centering
    \includegraphics[width=\linewidth]{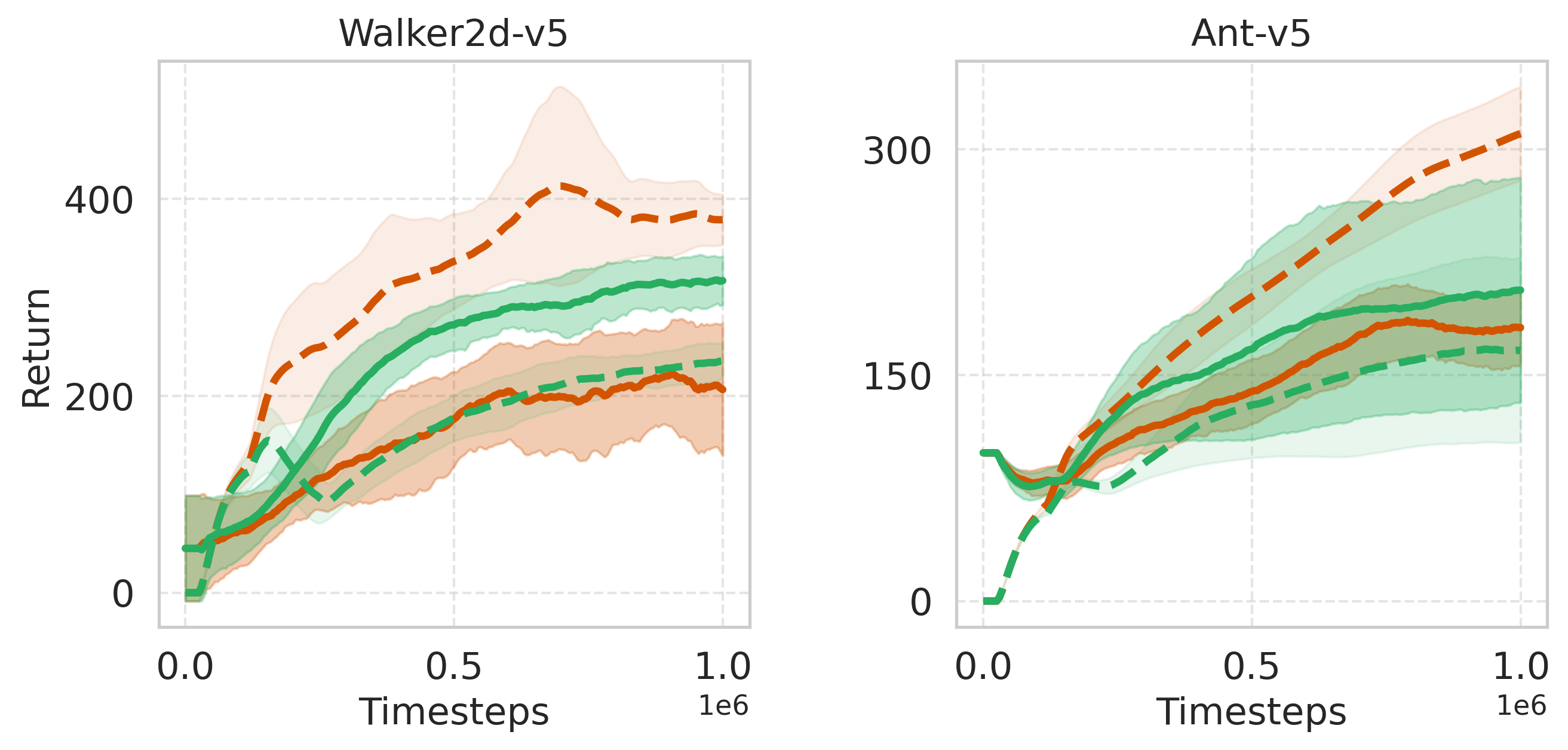}
\end{subfigure}

\vspace{0.8em}

\begin{subfigure}{0.95\textwidth}
\centering
\setlength{\tabcolsep}{5pt}

\begin{tabular}{llccccc}
\toprule
\textbf{Metric} & \textbf{Algorithm} & \textbf{Swimmer} & \textbf{Hopper} & \textbf{HalfCheetah} & \textbf{Walker2d} & \textbf{Ant} \\
\midrule

\multirow{2}{*}{\makecell[l]{Terminal\\Performance}}
& DDPG
& 19.4 $\pm$ 5.8
& 257.6 $\pm$ 9.7
& 564.8 $\pm$ 64.7
& 200.3 $\pm$ 34.1
& 182.8 $\pm$ 7.6 \\

& \rpiddpg
& \textbf{22.3 $\pm$ 1.8}
& \textbf{270.9 $\pm$ 2.0}
& \textbf{632.0 $\pm$ 50.4}
& \textbf{319.9 $\pm$ 17.1}
& \textbf{208.2 $\pm$ 73.5} \\

\midrule

\multirow{2}{*}{AUC ($\times 10^6$)}
& DDPG
& 20.1 $\pm$ 2.7
& 219.3 $\pm$ 10.3
& 483.7 $\pm$ 58.2
& 165.7 $\pm$ 4.8
& 144.4 $\pm$ 8.4 \\

& \rpiddpg
& \textbf{21.0 $\pm$ 2.6}
& \textbf{241.5 $\pm$ 6.5}
& \textbf{550.2 $\pm$ 48.2}
& \textbf{245.0 $\pm$ 10.6}
& \textbf{163.4 $\pm$ 50.5} \\

\bottomrule
\end{tabular}

\end{subfigure}

\caption{
Performance comparison between DDPG and \rpiddpg{} on MuJoCo environments.
\textbf{Top:} Learning curves for Swimmer-v5, Hopper-v5, HalfCheetah-v5, Walker2d-v5, and Ant-v5 (mean $\pm$ 1 std; solid: return, dashed: critic estimate).
\textbf{Bottom:} Terminal performance and AUC (mean $\pm$ std) across all environments.
\textbf{Summary:} \rpiddpg{} maintains lower-bound value estimates, while DDPG often overestimates. \rpiddpg{} matches DDPG on simpler tasks and outperforms DDPG on harder environments.
}

\label{fig:all_mujoco_results}

\end{figure}

This section provides a consolidated view of all MuJoCo environments discussed in Section~\ref{subsec:model_free_experiments}. In particular, we include results for Swimmer-v5 and HalfCheetah-v5, which were omitted from the main paper due to space constraints.

Figure~\ref{fig:all_mujoco_results} presents these results. The plots follow the same format as Figure~\ref{fig:mujoco_results}. Solid curves denote Monte Carlo returns averaged over 25 trajectories, while dashed curves denote the critic estimates for the initial state–action pair, averaged over the same trajectories. All curves are averaged over five random seeds, with shaded regions indicating $\pm 1$ standard deviation.

The results are consistent with the observations in the main text. \rpiddpg{} maintains value estimates that largely remain below the empirical returns after an initial transient phase, while DDPG exhibits noticeable overestimation in most environments. In simpler environments such as Swimmer-v5 and Hopper-v5, both methods achieve similar performance, while in the remaining environments \rpiddpg{} attains higher returns and learns faster. The summary table confirms this trend: \rpiddpg{} achieves higher terminal performance and larger AUC across all environments.

\section{COMPUTATIONAL RESOURCES AND SOLVERS}
\label{sec:comp_res_solvers}
All experiments were run on two high-performance computing machines. The first machine is
powered by an AMD EPYC 7763 64-Core Processor and has 1 TB of RAM, running Ubuntu 20.04.
The second machine is equipped with an AMD EPYC 9554 64-Core Processor, 188 GB of RAM,
and two NVIDIA GeForce RTX 5080 GPUs, running Ubuntu 22.04.

All optimization problems were solved using either the CVXPY \citep{cvxpy1, cvxpy2} or Gurobi \citep{gurobi} Python libraries. We utilized an academic license for the Gurobi optimizer. 
The deep reinforcement learning experiments were implemented in PyTorch \citep{ansel2024pytorch}, with GPU acceleration used where available.

\section{STAGNATION VS.\ DEGRADATION UNDER FUNCTION APPROXIMATION}
\label{sec:limitations}

\begin{figure}[t]
\centering
\begin{tikzpicture}[
    >=stealth,
    every state/.style={circle, draw, minimum size=1cm}
]

\node[state] (s1) at (0, 0) {1};
\node[state] (s2) at (3, 0) {2};
\node[state] (s3) at (1.5, -2) {3}; 

\path[->] (s1) edge[loop above] node {\textsf{stay}, $-1$} (s1);
\path[->] (s2) edge[loop above] node {\textsf{stay}, $-1$} (s2);

\path[->] (s1) edge[bend right=20] node[left=0.1cm] {\textsf{jump}, $-2$} (s3);
\path[->] (s2) edge[bend left=20] node[right=0.1cm] {\textsf{jump}, $-2$} (s3);

\path[->] (s3) edge[loop below] node {\textsf{absorb}, $-10$} (s3);

\end{tikzpicture}

\caption{
Three-state MDP illustrating stagnation vs.\ degradation.
}

\label{fig:rpi_stagnation_mdp}
\end{figure}

Under FA, policy evaluation can fundamentally alter the behavior of policy iteration. Projection-based methods may introduce uncontrolled errors on unvisited state-action pairs, leading to policy degradation. In contrast, RPI enforces Bellman-consistent lower bounds, which prevent such degradation. However, this conservatism can restrict the set of feasible updates: even when the current estimate is feasible, there may be no strictly improving direction within the function class. In such cases, the algorithm stagnates. This section illustrates that such stagnation is not a failure, but a consequence of maintaining reliable value estimates under approximation.

\paragraph{MDP description.}
Consider an MDP with states $S=\{1,2,3\}$ and discount $\gamma=0.9$. The initial-state distribution is uniform over states $1$ and $2$, i.e., $\nu = (0.5, 0.5, 0)$. In states $1$ and $2$, the actions are $\{\textsf{stay},\textsf{jump}\}$, while state $3$ has a single action $\textsf{absorb}$. The agent transitions and rewards are as follows.
\begin{itemize}
    \item \textsf{stay}: reward $-1$, self-loop,
    \item \textsf{jump}: reward $-2$, transitions to state $3$,
    \item \textsf{absorb}: reward $-10$, self-loop in state $3$.
\end{itemize}
A schematic of this MDP is shown in Figure~\ref{fig:rpi_stagnation_mdp}. Let $\pi_0$ (resp. $\pi_1$) choose \textsf{stay} (resp. \textsf{jump}) in states $1,2$.

\paragraph{Exact values.}
We compute $Q_{\pi_0}$ directly from the Bellman equations.

For the absorbing state,
\[
Q_{\pi_0}(3,\textsf{absorb})
= -10 + 0.9\, Q_{\pi_0}(3,\textsf{absorb}),
\]
which gives
\[
Q_{\pi_0}(3,\textsf{absorb}) = -100.
\]

For states $1$ and $2$, under $\pi_0$ the action \textsf{stay} is always taken, so
\[
Q_{\pi_0}(1,\textsf{stay})
= -1 + 0.9\, Q_{\pi_0}(1,\textsf{stay}),
\quad
Q_{\pi_0}(2,\textsf{stay})
= -1 + 0.9\, Q_{\pi_0}(2,\textsf{stay}),
\]
which yields
\[
Q_{\pi_0}(1,\textsf{stay}) = Q_{\pi_0}(2,\textsf{stay}) = -10.
\]

For the action \textsf{jump},
\[
Q_{\pi_0}(1,\textsf{jump})
= -2 + 0.9\, Q_{\pi_0}(3,\textsf{absorb})
= -2 + 0.9(-100) = -92,
\]
and similarly for state $2$.

Thus,
\[
Q_{\pi_0}
=
(-10,-92,-10,-92,-100)^\top,
\]
with coordinates ordered as
\[
(1,\textsf{stay}),\ (1,\textsf{jump}),\ (2,\textsf{stay}),\ (2,\textsf{jump}),\ (3,\textsf{absorb}).
\]

The expected return under the initial state-distribution $\nu = (1/2,1/2,0)$ is
\[
J_\nu(\pi) := \mathbb{E}_{s\sim \nu, a \sim \pi(\cdot|s)}[Q_\pi(s,a)],
\]
and hence
\[
J_\nu(\pi_0) = \tfrac12(-10) + \tfrac12(-10) = -10.
\]

Similarly, under $\pi_1$, both states transition immediately to state $3$, giving
\[
Q_{\pi_1}(1,\textsf{jump}) = Q_{\pi_1}(2,\textsf{jump}) = -92,
\]
while
\[
Q_{\pi_1}(3,\textsf{absorb}) = -100.
\]
Thus,
\[
Q_{\pi_1}
=
(-10,-92,-10,-92,-100)^\top,
\]
with the same coordinate ordering as above, and
\[
J_\nu(\pi_1) = -92 < J_\nu(\pi_0).
\]

\paragraph{Function class.}
We consider a linear FA class of the form
\[
\mathcal F = \{\Phi \theta : \theta \in \mathbb{R}^2\},
\]
where $\theta = (p, 1)^\top$ and the feature matrix $\Phi \in \mathbb{R}^{5 \times 2}$ is given by
\[
\Phi =
\begin{bmatrix}
1 & 0 \\
2 & 15 \\
1 & 0 \\
2 & 15 \\
2 & 15
\end{bmatrix}.
\]
Thus, for any $\theta = (p,1)^\top$, we obtain
\[
f(p) := \Phi \theta = (p,\,2p+15,\,p,\,2p+15,\,2p+15)^\top.
\]

We initialize at
\[
\theta_0 = (-57.5, 1)^\top,
\quad
f_0 = f(-57.5) = \Phi \theta_0 = (-57.5,-100,-57.5,-100,-100)^\top.
\]

\paragraph{Failure of projected/TD-style updates.}
We now analyze the update produced by projected (on-policy) policy evaluation of $\pi_0$. Under $\pi_0$, only the state-action pairs $(1,\textsf{stay})$ and $(2,\textsf{stay})$ are visited. 
Thus, the projected Bellman update minimizes the mean-squared Bellman error over these visited coordinates:
\[
\min_{p \in \mathbb{R}}
\;\;
\mathbb{E}_{(s,a)\sim d_{\pi_0}}
\bigl( f(p)(s,a) - T_{\pi_0} f(p)(s,a) \bigr)^2.
\]

Since $d_{\pi_0}$ is supported only on $(1,\textsf{stay})$ and $(2,\textsf{stay})$, and both coordinates share the same value $f(p)=p$, this reduces to
\[
\min_{p \in \mathbb{R}}
\;\;
\bigl( p - (-1 + 0.9 p) \bigr)^2 = (0.1 p + 1)^2.
\]

The minimizer is obtained by setting the derivative to zero.
\[
0.1 p + 1 = 0 \;\Rightarrow\; p = -10.
\]

Substituting this value,
\[
f(-10) = (-10,-5,-10,-5,-5)^\top.
\]

Thus, the projected update fits the Bellman equation exactly on the visited coordinates:
\[
f(-10)(s,\textsf{stay}) = -10 = Q_{\pi_0}(s,\textsf{stay}),
\]
but distorts the unvisited ones. In particular,
\[
Q_{\pi_0}(1,\textsf{jump}) = -92
\quad \text{while} \quad
f(-10)(1,\textsf{jump}) = -5.
\]

Hence, the \textsf{jump} action is severely overestimated. Greedy improvement therefore selects $\pi_1$, even though $J_\nu(\pi_1) < J_\nu(\pi_0)$. This example shows that projected policy evaluation minimizes Bellman error only on the data distribution $d_{\pi_0}$, and can therefore produce misleading estimates on unvisited state-action pairs, leading to policy degradation.

\paragraph{Behavior of RPI.}

We now verify that $f_0$ satisfies the RPI feasibility condition. 
For each coordinate, we check that $T_{\pi_0} f_0 \ge f_0.$

At states $1$ and $2$, $\pi_0$ selects \textsf{stay}, so
\[
T_{\pi_0} f_0(1,\textsf{stay})
= -1 + 0.9\, f_0(1,\textsf{stay})
= -1 + 0.9(-57.5)
= -52.75 \;\ge\; -57.5,
\]
and similarly for $(2,\textsf{stay})$.

For the \textsf{jump} action,
\[
T_{\pi_0} f_0(1,\textsf{jump})
= -2 + 0.9\, f_0(3,\textsf{absorb})
= -2 + 0.9(-100)
= -92 \;\ge\; -100,
\]
and similarly for $(2,\textsf{jump})$.

Finally, at state $3$,
\[
T_{\pi_0} f_0(3,\textsf{absorb})
= -10 + 0.9(-100)
= -100,
\]
which equals $f_0(3,\textsf{absorb})$.

Thus, $T_{\pi_0} f_0 \ge f_0$ holds coordinate-wise. Moreover, $f_0$ correctly ranks actions at states $1$ and $2$:
\[
f_0(s,\textsf{stay}) = -57.5 \;>\; -100 = f_0(s,\textsf{jump}),
\]
so the greedy policy with respect to $f_0$ coincides with $\pi_0$.

We now show that $f_0$ is the \emph{only} feasible solution. 
Consider any $f(p) \in \mathcal F$. The constraint $T_{\pi_0} f \ge f$ must hold for all coordinates.

Focusing on the absorbing coordinate,
\[
T_{\pi_0} f(3,\textsf{absorb})
= -10 + 0.9\, (2p+15).
\]
The feasibility condition requires
\[
-10 + 0.9(2p+15) \ge 2p+15.
\]
Rearranging,
\[
-10 + 1.8p + 13.5 \ge 2p + 15
\;\;\Longleftrightarrow\;\;
1.8p + 3.5 \ge 2p + 15
\;\;\Longleftrightarrow\;\;
p \le -57.5.
\]

On the other hand, the lower-bound constraint $f \ge f_0$ implies $p \ge -57.5$. 
Therefore, the only feasible value is $p = -57.5$, i.e.,
\[
\{f \in \mathcal F : T_{\pi_0} f \ge f \ge f_0\} = \{f_0\}.
\]

Thus, the RPI update admits no improving direction within $\mathcal F$, and the algorithm returns $f_1 = f_0$. The greedy policy remains $\pi_0$ in subsequent iterations, and the algorithm stagnates at this solution.

\paragraph{Takeaway.}
This example highlights a fundamental trade-off. When the function class poorly represents critical state-action pairs, unconstrained projection can produce overly optimistic estimates and lead to policy degradation. In contrast, RPI enforces lower-bound updates that prevent such degradation, but may stagnate when no feasible improving direction exists. 

In general, intermediate policies generated by RPI may exhibit degradation in their true value. However, the learned estimates remain certified lower bounds, i.e., $f_k \le Q_{\mu_k}$ for all $k$. Thus, even when performance fluctuates, the estimates provide a reliable signal that does not overestimate the true values. Such reliability is not guaranteed in projected Bellman-type evaluation methods, where the estimates can be arbitrarily optimistic.

\end{document}